\theoremstyle{plain}
\newtheorem{theorem}{Theorem}
\newtheorem{lemma}[theorem]{Lemma}
\theoremstyle{definition}
\newtheorem{definition}[theorem]{Definition}
\newtheorem{assumption}{Assumption}
\theoremstyle{remark}
\definecolor{mypink1}{rgb}{0.858, 0.188, 0.478}
\newtheorem*{rep@theorem}{\rep@title}
\newcommand{\newreptheorem}[2]{%
\newenvironment{rep#1}[1]{%
 \def\rep@title{#2 \ref{##1}}%
 \begin{rep@theorem}}%
 {\end{rep@theorem}}}
\DeclareMathOperator{\E}{\mathbb{E}}
\renewcommand{\P}{\operatorname{\mathbb{P}}}
\DeclarePairedDelimiter{\norm}{\lVert}{\rVert}
\DeclarePairedDelimiter{\abs}{\lvert}{\rvert}
\DeclarePairedDelimiter{\braces}{\{}{\}}
\DeclarePairedDelimiter{\parens}{(}{)}
\DeclarePairedDelimiter{\brackets}{[}{]}
\DeclarePairedDelimiterX{\ip}[2]{\langle}{\rangle}{#1,#2}
\DeclarePairedDelimiterXPP{\normsub}[2]{}{\lVert}{\rVert}{_{#2}}{#1}
\DeclarePairedDelimiterXPP{\ipsub}[3]{}{\langle}{\rangle}{_{#3}}{#1,#2}
\DeclarePairedDelimiterXPP{\ipHS}[2]{}{\langle}{\rangle}{_{\mathrm{HS}}}{#1, #2}
\DeclarePairedDelimiterXPP{\normHS}[1]{}{\lVert}{\rVert}{_{\mathrm{HS}}}{#1}
\DeclarePairedDelimiterXPP{\ipF}[2]{}{\langle}{\rangle}{_{\mathrm{F}}}{#1, #2}
\DeclarePairedDelimiterXPP{\normF}[1]{}{\lVert}{\rVert}{_{\mathrm{F}}}{#1}
\DeclarePairedDelimiterXPP{\normt}[1]{}{\lVert}{\rVert}{_{2}}{#1}
\DeclarePairedDelimiterXPP{\normo}[1]{}{\lVert}{\rVert}{_{\mathrm{1}}}{#1}
\DeclarePairedDelimiterXPP{\dkl}[2]{\operatorname{D_{KL}}}{(}{)}{}{#1 \: \delimsize\Vert \: #2}
\DeclarePairedDelimiterXPP{\restr}[2]{}{{}}{\vert}{_{#2}}{#1}
\newcommand{\R}{\mathbb{R}}
\newcommand{\given}{\:\vert\:}
\newcommand{\simiid}{\overset{\mathclap{\text{i.i.d.}}}{\sim}}
\newcommand{\blx}{\mathbf{x}}
\newcommand{\blz}{\mathbf{z}}
\newcommand{\blI}{\mathbf{I}}
\newcommand{\blv}{\mathbf{v}}
\newcommand{\blu}{\mathbf{u}}
\newcommand{\blw}{\mathbf{w}}
\newcommand{\blZ}{\mathbf{Z}}
\newcommand{\blg}{\mathbf{g}}
\newcommand{\blh}{\mathbf{h}}
\newcommand{\bltheta}{\boldsymbol{\theta}}
\newcommand{\blTheta}{\boldsymbol{\Theta}}
\newcommand{\blthetatl}{\tilde{\bltheta}}
\newcommand{\blthetahat}{\hat{\bltheta}}
\newcommand{\blTStl}{\tilde{\bltheta}^*}
\newcommand{\blTS}{\bltheta^*}
\newcommand{\blrho}{\boldsymbol{\rho}}
\newcommand{\blrhotl}{\tilde{\blrho}}
\newcommand{\bllambda}{\boldsymbol{\lambda}}
\newcommand{\ones}{\mathbf{1}}
\newcommand{\scrL}{\mathcal{L}}
\newcommand{\blSigma}{\boldsymbol{\Sigma}}
\newcommand{\scrN}{\mathcal{N}}
\newcommand{\scrW}{\mathcal{W}}
\newcommand{\scrM}{\mathcal{M}}
\newcommand{\scrS}{\mathcal{S}}
\icmltitlerunning{Unveiling Class Disparities with Spectral Imbalance}
\begin{document}

\twocolumn[
\icmltitle{
% Spectral imbalance causes disparities across classes
Balanced Data, Imbalanced Spectra: \\ Unveiling Class Disparities with Spectral Imbalance
}

% It is OKAY to include author information, even for blind
% submissions: the style file will automatically remove it for you
% unless you've provided the [accepted] option to the icml2023
% package.

% List of affiliations: The first argument should be a (short)
% identifier you will use later to specify author affiliations
% Academic affiliations should list Department, University, City, Region, Country
% Industry affiliations should list Company, City, Region, Country

% You can specify symbols, otherwise they are numbered in order.
% Ideally, you should not use this facility. Affiliations will be numbered
% in order of appearance and this is the preferred way.
\icmlsetsymbol{equal}{*}

\begin{icmlauthorlist}
\icmlauthor{Chiraag Kaushik}{equal,gt}
\icmlauthor{Ran Liu}{equal,gt}
\icmlauthor{Chi-Heng Lin}{gt,comp}
\icmlauthor{Amrit Khera}{gt}
\icmlauthor{Matthew Y Jin}{sch}
\icmlauthor{Wenrui Ma}{gt}
\icmlauthor{Vidya Muthukumar}{gt}
%\icmlauthor{}{sch}
\icmlauthor{Eva L Dyer}{gt}
%\icmlauthor{Firstname8 Lastname8}{yyy,comp}
%\icmlauthor{}{sch}
%\icmlauthor{}{sch}
\end{icmlauthorlist}

\icmlaffiliation{gt}{Georgia Institute of Technology, Georgia, the USA.}
% \icmlaffiliation{yyy}{Department of XXX, University of YYY, Location, Country}
\icmlaffiliation{comp}{Samsung Research.}
\icmlaffiliation{sch}{Stanford University, California, the USA}

\icmlcorrespondingauthor{Ran Liu}{rliu361@gatech.edu}
\icmlcorrespondingauthor{Chiraag Kaushik}{ckaushik7@gatech.edu}
\icmlcorrespondingauthor{Eva L Dyer}{evadyer@gatech.edu}
\icmlcorrespondingauthor{Vidya Muthukumar}{vmuthukumar8@gatech.edu}

% You may provide any keywords that you
% find helpful for describing your paper; these are used to populate
% the "keywords" metadata in the PDF but will not be shown in the document
\icmlkeywords{Machine Learning, ICML}

\vskip 0.3in
]
% this must go after the closing bracket ] following \twocolumn[ ...

% This command actually creates the footnote in the first column
% listing the affiliations and the copyright notice.
% The command takes one argument, which is text to display at the start of the footnote.
% The \icmlEqualContribution command is standard text for equal contribution.
% Remove it (just {}) if you do not need this facility.

%\printAffiliationsAndNotice{}  % leave blank if no need to mention equal contribution
\printAffiliationsAndNotice{\icmlEqualContribution} % otherwise use the standard text.

\begin{abstract}
Classification models are expected to perform equally well for different classes, yet in practice, there are often large gaps in their performance. This issue of class bias is widely studied in cases of datasets with {\em sample imbalance}, but is relatively overlooked in balanced datasets.
In this work, we introduce the concept of {\em spectral imbalance} in features as a potential source for class disparities and study the connections between spectral imbalance and class bias in both theory and practice.  To build the connection between spectral imbalance and class gap, we develop a theoretical framework for studying class disparities and derive exact expressions for the per-class error in a high-dimensional mixture model setting. We then study this phenomenon in 11 different state-of-the-art pretrained encoders, and show how our proposed framework can be used to compare the quality of encoders, as well as evaluate and combine \emph{data augmentation} strategies to mitigate the issue.
Our work sheds light on the class-dependent effects of learning, and provides new insights into how state-of-the-art pretrained features may have unknown biases that can be diagnosed through their spectra. Code can be found at \href{https://github.com/nerdslab/SpectraImbalance}{https://github.com/nerdslab/SpectraImbalance}.
\end{abstract}

% Intro
\vspace{-3mm}
\section{Introduction}
A core objective in machine learning is to build fair classification models that provide good performance irrespective of the class to which the data belongs. In practice, however, 
models are often biased towards better performance on some classes of the data and may do a poor job on others. 

Most studies and approaches for dealing with class bias have focused on \textit{sample imbalance}, or the fact that different classes often have different number of samples \cite{zhang2023deep}. In this case, mechanisms to reweight the loss or rebalance the dataset can be used to correct the sample imbalance~\cite{ren2020balanced,zhu2022balanced}. 
Somewhat surprisingly, \emph{even without sample imbalance} there can still be significant performance gaps across classes, including on state-of-the-art encoders \cite{ma2022delving, ma2023curvature}. These effects can become more pronounced depending on the types of regularization and/or data augmentation used during training \cite{balestriero2022effects,kirichenko2023understanding}. 
%The presence of \emph{class disparity} even on balanced datasets is overlooked in the empirical literature, and why it occurs is a significantly more subtle question.
Thus, we need a better understanding of the sources of class disparities and new approaches for identifying and mitigating underlying biases in models. This is particularly relevant due to the popularity of foundation and pretrained models, since underlying biases in the pretrained features may impact performance and robustness on downstream tasks.

In this work, we introduce a new framework for studying class-dependent generalization that relies on a concept that we call {\em spectral imbalance}. The central idea behind this perspective is that differences, or shifts, in the distribution of features across classes could be the source of class biases. We characterize these differences in  feature geometry using the distribution of \emph{eigenvalues}, i.e. the \emph{spectrum}, for each class and show, both in theory and in practice, that deviations in spectra across classes will indeed produce interesting effects on the class gap. Our findings complement recent empirical work that connects certain geometric properties of learned features to per-class performance \cite{ma2022delving, ma2023curvature} while also providing a formal theoretical connection between spectral imbalance and class gaps in high-dimensional linear models.

First, we develop a theoretical framework for studying this phenomenon and dig deeper into the distinct ways in which spectral imbalance may impact class gap.   In particular, we derive \textit{exact asymptotic expressions} for the class-wise error % and class gap 
in a Gaussian mixture model with class-specific covariances. 
We use these expressions to uncover different types of spectral imbalance and characterize their consequences for the class gap. This theory provides a strong basis for the investigation of the spectra in per-class generalization.

%To understand the extent to which class and spectral imbalance occur in practice, 
%show the prevalence of imbalance across classes in the representation space that we refer to as \textit{spectral imbalance}. Informally, spectral imbalance refers to situations where the distribution of features across different classes is significantly different, with some classes having large variance along certain directions and others having more restricted or clustered features. 
%We show how such imbalances can lead to poor generalization performance for certain classes, even when the number of training samples per class is identical. 

Next, in extensive empirical investigations, we study the class-dependent spectra for 11 state-of-the-art pretrained encoders (with 6 distinct types of architecture) on ImageNet. 
Across the board, we observe that the class gap is strongly correlated with the spectral imbalance of the representation space and that through measuring different statistics of this imbalance, we can predict which classes will perform worse than others without actually testing the model.
Furthermore, we define a classwise \emph{spectral quantile measure} that we use to diagnose different models, predict which encoders will have smaller class gaps, and also determine smarter augmentation strategies. %  to mitigate class bias \textit{without any additional training}.
Our empirical results provide strong evidence that spectral imbalance plays an important role in the characteristics of representation space.

In summary, our major contributions are as follows:
\begin{itemize}
\itemsep0em
    \item In Section~\ref{sec:theory-intuition}, we introduce the concept of {\em spectral imbalance} and study how discrepancies in eigenvalues between classes are related to the problem of class-dependent generalization.
    \item In Section~\ref{sec:theory-formal}, we derive \textit{exact} theoretical expressions for the per-class generalization in a high-dimensional linear mixture-model setting (Theorem~\ref{thm:main_thm}). The resultant simulations demonstrate the effect of different spectral imbalances on the per-class performance gap. 
    % \item We characterize the \textit{spectrum-modifying} effect of various common data augmentations and analyze the consequences for the per-class generalization.
    \item In Section~\ref{sec:experiments}, we empirically demonstrate that similar phenomena hold in the representation space of pre-trained models. Thus, we design a \emph{spectral quantile score}, that accurately measures the amount of `imbalance' in representation space.
    % making the spectral property a good evaluation metric on itself to predict the worst performing classes and the amount of ``imbalance'' in different encoders.
    \item Finally, in Section~\ref{exp:data_augmentation}, we provide an initial exploration of how to improve augmentation design to mitigate spectral imbalance, and design a simple ensemble method to combine augmentations and improve performance across classes \textit{without any re-training}.
    % we initiate a study of data manipulation plans, and propose spectra re-factoring
\end{itemize}

% Setup
% Need to re-work before adding this back in
% \input{texts/problemsetup}

% Theory
% \vspace{-4mm}
\section{Why spectral imbalance?}\label{sec:theory-intuition}
% \paragraph{Settings} 

% \subsection{Motivation}

Effectively addressing class bias necessitates a deep understanding of the underlying 
% data structure. 
structure of the representation space.
One promising avenue is through analyzing the spectral properties of the features corresponding to each class. The spectrum, i.e., the set of eigenvalues of the covariance matrix of the representation, serves as a powerful indicator of the features' intrinsic geometry and dimensionality. 
It characterizes how variance is distributed across different principal components, offering a window into the structure of the representation space. 
We thus hypothesize that variations in these 
spectral properties will provide crucial insights into class-specific performance, potentially revealing biases that are not immediately discernible through traditional measures of imbalance (e.g. sample size per class).

We explore in particular the extent to which {\em spectral imbalance}, or variation in  spectral characteristics across different classes, impacts class performance. 
% Based upon these motivations, we first sought to understand whether the spectra of different classes are indeed distinct. 
As a motivating example, in Figure \ref{fig:motivation}, we estimate and examine the spectrum for different classes in a pre-trained ResNet-50 \cite{he2016deep}. We make two important observations:
% \vspace{-2mm}
\begin{itemize} %[noitemsep]
    \item First, we find that the eigenvalue distributions differ significantly between top-performing and worst-performing classes, with top-performing classes exhibiting a uniformly \emph{smaller} spectrum (Figure~\ref{fig:motivation}(A)).
    
    \item Second, when comparing 
    %the ($5^{th}$) 
    eigenvalues of different pretrained encoders,
    %ResNet-50 features with those of the Vision Transformer (VIT) \cite{dosovitskiy2020image}, 
    we observe that the distribution of the eigenvalues shifts considerably. For example, classes might have similarly ranked eigenvalues in one encoder, but vastly different eigenvalues in another
    % across many positions for the same class
    % different encoders can have very different effects on the eigenvalues of the same data
    (see Figure~\ref{fig:motivation}(B), more in Appendix Figure~\ref{fig:appendix_dist_compare}).
    %{\color{red} General thought: I think the concepts in Figure 1 shall be more intuitive? like, does generalization perform differently for ViT and ResNet 50 and what does the class eigen distrubtion differs for the two models imply (e.g. good or bad?). }
\end{itemize}

% This work seeks to understand the effect of these spectral imbalances, as it is crucial to develop models that are not only accurate but also fair and robust across diverse classes.

% Spectral imbalance could manifest as differences in the magnitude, or the decay rates of eigenvalues
% disparities in the concentration of spectral energy 
% across classes. 
% A class with a rapidly decaying eigenspectrum might indicate a more compact or less complex data structure, potentially leading to different generalization properties compared to a class with a broader spectrum. 

%The eigenvalues are one of the most widely used measures of the structure or geometry of a dataset. The distribution of eigenvalues, or the spectra, is an established measure for representation quality (cite) and in overparameterized generalization (cite), and is also used for measuring the complexity of data. Thus, it seemed only natural to start with an understanding of how the spectral properties of different classes may vary. 

\begin{figure}[!t]
    \begin{center}
    \centerline{{\includegraphics[width=0.92\columnwidth]{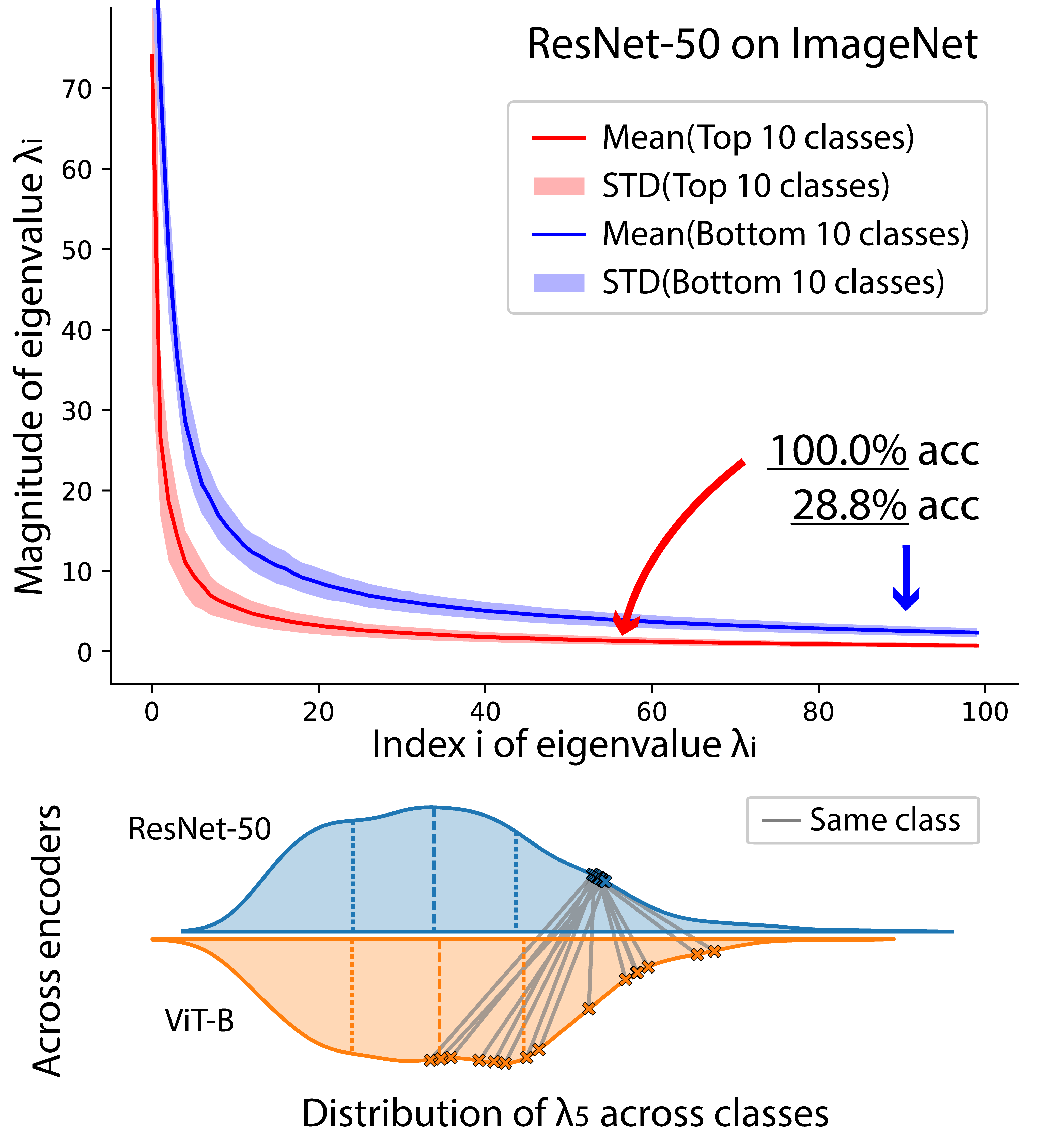} }}%
    % \centerline{{\includegraphics[width=0.8\columnwidth]{} }}%
    %\vspace{-2mm}
    \caption{\textit{Spectral imbalance in different pretrained models.} (A) We plot the spectrum of the top and bottom performing classes and find that classes that achieve lower accuracy have larger eigenvalues. (B) Histogram of the $k = 5$ eigenvalue across classes for two encoders (ResNet-50, ViT-B) with the quartiles indicated. If we look at classes with similarly ranked or ``adjacent'' eigenvalues (upper quartile) in ResNet-50, we find that they can be mapped to very different positions in the distribution of eigenvalues in another encoder (ViT-B).
    % of any given eigenvalue across different classes can differ between different encoders, 
    % This implies that spectral imbalance might be a meaningful property of the representation space independent of the data. 
    \label{fig:motivation}}
    \end{center}
    \vspace{-4mm}
\end{figure}

Both of these observations motivate the need to understand the differences in class-dependent spectra, and % thus we wanted to understand 
how different sources of spectral imbalance may impact downstream performance.

% This work seeks 

\vspace{-2mm}
\subsection{Examining the role of the spectrum  in class-dependent generalization}\label{sec:toyex}

%It is natural to raise the question of why the spectrum is instrumental for generalization, and how  \emph{spectral imbalance across classes} could  disparity in performance.
%A starting observation is that downstream tasks are commonly solved by training a \emph{linear model} on the pre-trained features.
%he generalization error of linear models has been tightly linked with favorable spectral properties of the features/covariates in a recent line of work showing \emph{benign overfitting}~\cite{bartlett2020benign}.
%Therefore, at a high level, it makes sense that the spectra of pretrained features will play some role, whether positive or negative, in both generalization and disparities in generalization across classes.

To see more concretely how spectral balance across classes can play a role in \emph{class-dependent generalization}, we first consider a simple linear classification setting, where the features follow a 2-class Gaussian mixture model (GMM):
\[
\blx \given y \simiid \mathcal{N}(y \blTS, \blSigma_y).
\]
Here, $\bltheta^* \in \R^p$ is an unknown signal controlling the class means, $y \in \{-1, 1\}$ is the label, and $\blSigma_y$ are the covariance matrices for each class. %(which are assumed to be symmetric about the origin)
% The data is generated with $\blx \given y = y \bltheta^* + \blSigma_y^{1/2} \blz$. 
% Here, $\bltheta^* \in \R^p$ is an unknown signal, $\blz \sim \scrN(0,1)$, and the $\blSigma_y$ are the covariance matrices of each class.
For any fixed estimator $\blthetahat$, inference is performed as $\hat{y} = \text{sign}(\blx^\top \blthetahat)$. Then, the class-dependent probability-of-error (POE) can be written as:

% We study a two-class, symmetric, GMM where the classes have antipodal means and different covariance matrices. Specifically, given a class $y \in \{-1,1\}$, data from the GMM is generated as $\blx \given y = y \bltheta^* + \blSigma_y^{1/2} \blz$. We focus here on the binary classification case, but multiclass extensions of the insights from this section are studied empirically in Section XXX. Here, $\bltheta^* \in \R^p$ is an unknown signal, $\blz \sim \scrN(0,1)$, and the $\blSigma_y$ are the covariance matrices of each class.

\vspace{-2mm}
\begin{align}
\label{eq:POE}
\text{POE}(\blthetahat \vert y) & \coloneqq \P \braces{\text{sign}(\blx^\top \blthetahat) \neq y | \blx \text{ in class } y } \\ 
& = 
Q\left(\frac{\langle \blthetahat, \blTS \rangle}{\Vert \blSigma_{y}^{1/2} \blthetahat \Vert_{2}}\right)
\end{align}
where $Q(\cdot)$ is the Gaussian Q-function (see proof in Appendix \ref{app:theory}). Fixing the estimator $\blthetahat$, we can see that the class dependency of the POE is captured by the term $\Vert \blSigma_{y}^{1/2} \blthetahat \Vert_{2}$. 
Hence, it is natural to consider differences in this quantity as indicators of how differently 
% $\blthetahat$ 
an estimator
can be expected to perform on the two classes. 
% In particular, $\blthetahat$ achieves small class gap when $\Vert \blthetahat \Vert_{\blSigma_{-1}} \approx \Vert \blthetahat \Vert_{\blSigma_1}$, for which a sufficient (but by no means necessary) condition is that $\blSigma_1 \approx \blSigma_{-1}$. 
In particular, an estimator $\blthetahat$ will achieve small class gap when $\Vert \blSigma_{y=1}^{1/2} \blthetahat \Vert_{2} \approx \Vert \blSigma_{y=-1}^{1/2} \blthetahat \Vert_{2}$, for which a sufficient (but by no means necessary) condition is that $\blSigma_{y=1} \approx \blSigma_{y=-1}$. 

As a concrete example, suppose for simplicity that the eigendecompositions of the class covariances are $\blSigma_{y} = \sum_{j} \lambda_j^{(y)} \blv_j \blv_j^T$, i.e., both share the same eigenvectors. Then, 
%\vspace{-1mm}
%\begin{equation}
%\begin{aligned}
\[
    \textit{ClassGap}(\blthetahat) \coloneqq \abs*{\text{POE}(\blthetahat \vert y=-1) - \text{POE}(\blthetahat \vert y=1)} \\
    %\text{positively correlates } \abs*{\sum_{p}(\lambda_i^{(y=-1)} - \lambda_i^{(y=1)}) \langle \blthetahat, \blv_i \rangle^2}
\]
%\end{aligned}
%\end{equation}
is positively correlated with the quantity
%\vspace{-1mm}
%\begin{equation}
%\begin{aligned}
\[
    %\textit{class gap} = \abs*{\text{POE}(\blthetahat \vert y=-1) - \text{POE}(\blthetahat \vert y=1)} \\
    \abs*{\sum_{j}(\lambda_j^{(y=-1)} - \lambda_j^{(y=1)}) \langle \blthetahat, \blv_j\rangle^2}
\]
%\end{aligned}
%\end{equation}

%we can expect a positive correlation between the \textit{class gap} $\abs*{\text{POE}(\blthetahat \vert y\text{=}-1) - \text{POE}(\blthetahat \vert y=1)}$ and the quantity
%\begin{align}
%\abs*{\norm{\blthetahat}_{\blSigma_{-1}}^2 - \norm{\blthetahat}_{\blSigma_{1}}^2} = \abs*{\sum_{p}(\lambda_i^{(y=-1)} - \lambda_i^{(y=1)}) \langle \blthetahat, \blv_i \rangle^2}
%\end{align}

% Supposing without loss of generality that $\Vert \blthetahat \Vert_{\blSigma_1} \geq \Vert \blthetahat \Vert_{\blSigma_{-1}}$, we can explicitly express the class-dependent generalization error and the class gap achieved in terms of the Gaussian Q-function and pdf (denoted $Q(\cdot)$ and $\Phi(\cdot)$, respectively). In particular,
% \begin{align}
% \text{POE}(\blthetahat \vert y) &= Q\left(\frac{\langle \blthetahat, \blTS \rangle}{\Vert \blthetahat \Vert_{\blSigma_{y}}}\right) \\
% \text{ClassGap}({\blthetahat}) &=  Q\left(\frac{\langle \blthetahat, \blTS \rangle}{\Vert \blthetahat \Vert_{\blSigma_{-1}}}\right) - Q\left(\frac{\langle \blthetahat, \blTS \rangle}{\Vert \blthetahat \Vert_{\blSigma_1}}\right) \\
% &\leq \Phi\left(\frac{\langle \blthetahat, \blTS \rangle}{\Vert \blthetahat \Vert_{\blSigma_1}}\right)\langle \blthetahat, \blTS \rangle (\Vert \blthetahat \Vert_{\blSigma_{-1}}^{-1} - \Vert \blthetahat \Vert_{\blSigma_1}^{-1}).
% \end{align}
From these expressions, we can see that if the eigenvalues of $\blSigma_{1}$ are uniformly smaller than those of $\blSigma_{-1}$, Class 1 will have smaller generalization error. 
This conforms with the familiar intuition that classes with smaller variance (i.e., less noise) should be ``easier'' to identify, as shown in Figure~\ref{fig:motivation}. Moreover, the effect of an eigenvalue gap between the two classes is unevenly distributed between different coordinates, depending on the structure of the estimator $\blthetahat$. 

% A class with a rapidly decaying eigenspectrum might indicate a more compact or less complex data structure, potentially leading to different generalization properties compared to a class with a broader spectrum. 

Of course, in general, the connection between the spectra of features corresponding to a specific class and the per-class generalization is considerably more nuanced, since $\blthetahat$ is itself learned from training examples and depends crucially on the properties of the spectra of both classes. 
Indeed, many recent works on benign overfitting in linear regression and classification~\cite{bartlett2020benign,muthukumar2020class, cao2021risk, wang2021binary} have shown that the properties of the (overall) data spectrum play a vital role in achieving good overall performance, particularly in the overparameterized regime. 
% Hence for our purposes, favorable features (including those obtained from pretrained models) should be distributed so that (a) the \textit{global} spectral properties allow for good overall generalization, and, additionally, (b) the \textit{per-class} spectra are sufficiently similar to each other to keep the class gap minimal.
% We define and investigate the \emph{spectral balance} property as the per-class spectra being sufficiently similar to each other to keep the class gap minimal.
% At a high level, we use the term \emph{spectral balance} to refer to property (b).
In the forthcoming sections, we extend these insights to a fundamental understanding of how the \emph{class-wise spectrum} affects \textit{per-class} performance, and build the framework of spectral imbalance.
\section{An exact characterization of the class gap}
\label{sec:theory-formal}

% In practice, the $\blthetahat$ often has the form of a trainable classifier, and it often depends on your training dataset. In this section, we derive the exact expression when replacing $\blthetahat$ to a more realistic term.

In this section, we explore the factors that govern class imbalances by studying the % special
case of high-dimensional linear classification in the GMM setting introduced in Section \ref{sec:theory-intuition}, where the mixtures correspond to the classes. The GMM is a natural way to study class covariance differences and provides insights into the types of representations that might result in larger class biases in practice. Here, we provide an exact asymptotic characterization of the per-class generalization error and class gap. Our framework allows us to model nuanced relationships between the eigenvalues of each class covariance, providing a rigorous testbed for studying their effect on per-class generalization.
% \vspace{-3mm}
% In this section, we derive a precise characterization of the per-class error (Equation~\eqref{eq:POE}) for a large class of realistic classifiers learned via regularized empirical risk minimization (ERM). Our theoretical predictions allows us to make precise the intuitions of the previous section by exactly characterizing the dependencies between per-class performance and the underlying spectral properties of the class distributions. 
% \vspace{-3mm}
\subsection{Problem formulation}
%\paragraph{The GMM model.}~
We consider a training scheme where we are given $n$ i.i.d. samples $(\blx, y)$ from the GMM setting in Section \ref{sec:toyex}, where $n_1$ (resp., $n_{-1}$) samples come from Class $1$ (resp., Class -1). We study \emph{empirical risk minimization} (ERM) estimators of the form
\[
\blthetahat \coloneqq \arg \min_{\bltheta \in \R^p} \frac{1}{n} \sum_{i=1}^n \scrL(y_i \ip{\blx_i}{\bltheta}) + r \normt{\bltheta}^2,
\]
where $\scrL$ is a convex loss function and $r>0$ is the ridge regularization parameter. Our goal is to characterize the per-class test error of the resultant $\blthetahat$, given by Equation~\eqref{eq:POE}.
%\[
%\text{POE}(\blthetahat \given y) \coloneqq \P \braces{y \neq \sign\parens{\ip{\blthetahat}{\blx}} \given y}.
% \]
% Our results apply in the high-dimensional asymptotic setting, where $n, p \to \infty$ jointly, with the ratios $\frac{n}{p} = \delta$, $\frac{n_y}{n} = \pi_y$ fixed (and clearly, $\pi_1 + \pi_{-1} = 1)$. As such, 

Our theory considers the modern \textit{high-dimensional} setting where both the model complexity and the number of training samples are large. Specifically, we study the per-class error in the asymptotic limit where $n, p \to \infty$ jointly, with the ratios $\frac{n}{p} = \delta$, $\frac{n_y}{n} = \pi_y$ fixed (and clearly, $\pi_1 + \pi_{-1} = 1)$.\footnote{We note here that the quantities $\blthetahat, \blSigma_y, \bltheta^*$, and $\text{POE}(\blthetahat \given y)$ should be interpreted as sequences indexed by $n$; however, we will suppress the dependence on $n$ in the notation for clarity.}
Our theory allows for overparameterization ($\delta < 1$) or underparameterization ($\delta > 1$), and for any constant regularization parameter $r > 0$. 

\begin{figure*}[!t]
    \centering
    \includegraphics[width=\textwidth]{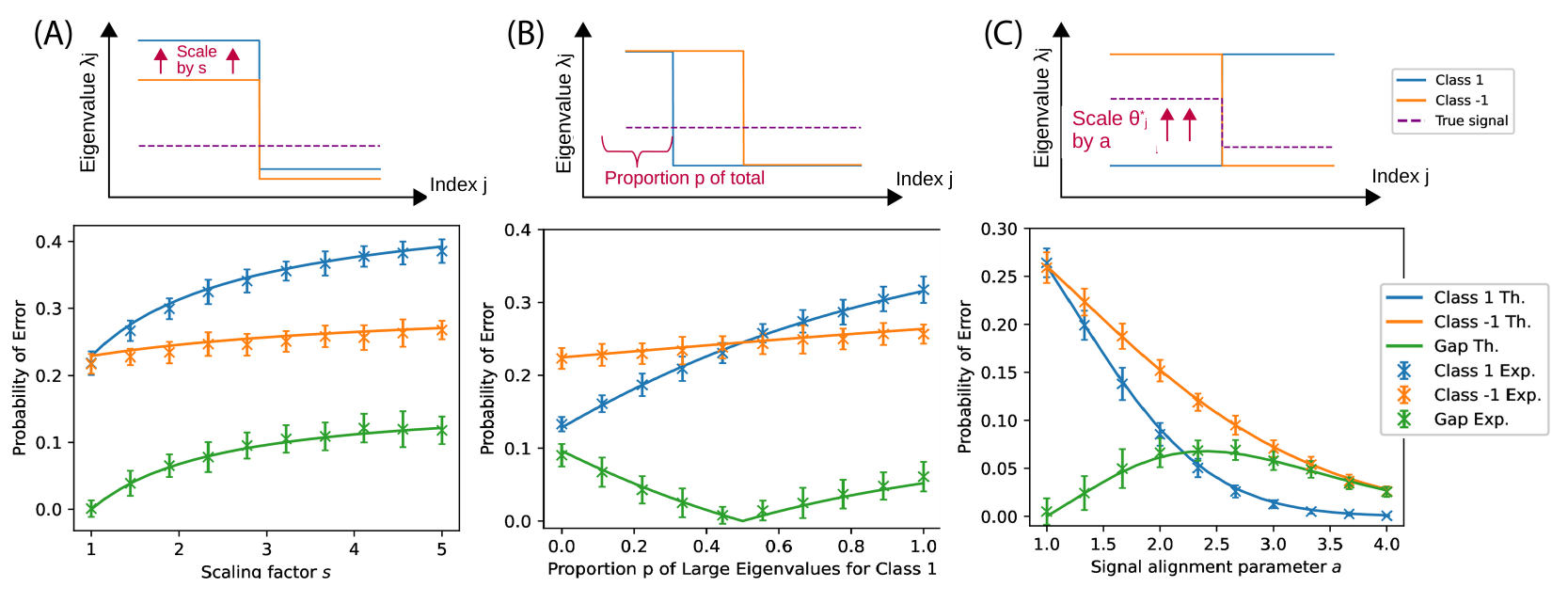}
    % \subfloat[\centering Eigenvalue scaling]{{\includegraphics[width=5cm]{} }}%
    % \quad
    % \subfloat[\centering Eigenvalue decay rate]{{\includegraphics[width=5cm]{} }}%
    % \quad
    % \subfloat[\centering Alignment with target signal]{{\includegraphics[width=5cm]{} }}%
    % \vspace{-8mm}
    \caption{Spectral imbalance in the GMM for the settings in Section \ref{sec:theory-insights}. Top Row: Visualization of the spectra of both classes. Bottom Row: Theoretical predictions of Theorem \ref{thm:main_thm} (solid lines) and numerical simulations (average over 50 trials $\pm 1$ std.) for the per-class error.}%
    \label{fig:theory}
    %\vspace{-4mm}
\end{figure*}

\paragraph{Notation.} 
The Mahalanobis norm of a vector, defined for a positive semi-definite matrix $\mathbf{H}$ is given by $\|\blx\|_{\mathbf{H}}=\sqrt{\blx^\top\mathbf{H}\blx}$. 
% For a matrix $\blA$, $\mathrm{diag}({\blA})$ denotes the diagonal matrix with the diagonal equal to that of $\blA$, $\tr{\blA}$ denotes its trace, and $\mu_i(\blA)$ its i-th largest eigenvalue.
We also define the following function, which we denote the \textit{$L$th order Moreau envelope function} of a given convex function $f: \R \to \R$:
\[
\scrM^{(L)}_f(x_1, .\hfil.\hfil., x_L ;\tau_1, .\hfil.\hfil., \tau_L) \coloneqq \min_{v \in \R} f(v) + \sum_{\ell=1}^L \frac{1}{2\tau_{\ell}} \parens{x_\ell-v}^2
\]
Note that for $L=1$ this is the classical Moreau envelope function of $f$. We will often use the shorthand notation $\scrM^{(2)}(x_y; \tau_y) = \scrM^{(2)}(x_1, x_{-1}; \tau_1, \tau_{-1})$. The notation $\overset{P}{\to}$ denotes convergence in probability, $\overset{\scrW_2}{\to}$ denotes convergence in Wasserstein-2 distance, and the empirical distribution of a random variable $a_j$ is denoted $\frac{1}{p}\sum_{j=1}^p \delta(a_j)$.

% However, unlike those works, we specifically allow each class to have different covariance matrix, so we additionally require that the joint distribution of eigenvalues from \textit{both} classes converge to some distribution.

%\begin{assumption}
%    Assume that $\bltheta^*$ and $\blSigma_y$ satisfy the following:
%    \begin{enumerate}
%        \item The $\blSigma_y$ are diagonal.
%        \item $\normt{\blSigma_y^{-1/2} \bltheta^*} \to \zeta_y$ and $\normt{\bltheta^*} \to C < \infty$ in probability.
%        \item Let $\lambda_{y,i}$ be the $i\text{th}$ diagonal entry of $\blSigma_y$. Then, the empirical joint distribution of $(\sqrt{p} \bltheta^*_i, \lambda_{1, i}, \lambda_{-1,i})$ converges in Wasserstein-2 distance to a distribution $\Pi$:
%        $\frac{1}{p} \sum_{i=1}^p \delta(\sqrt{p} \bltheta^*_i, \lambda_{1, i}, \lambda_{-1,i}) \overset{\scrW_2}{\to} \Pi$.
%    \end{enumerate}
%\end{assumption}

\vspace{-2mm}
\subsection{Main results} 
We now state our general asymptotic result for the GMM with spectral imbalance. Our analysis relies on the Convex Gaussian Min-Max Theorem \cite{thrampoulidis2018precise} and is influenced (particularly in the treatment of anisotropic covariances) by \cite{montanari2019generalization} and \cite{taheri2020asymptotic}, who consider, respectively, max-margin classification and adversarial training in the GMM \emph{without} spectral imbalance. After stating the assumptions and theorem, we briefly outline major differences in the proof technique while deferring the full proof 
% of the result 
to Appendix \ref{app:theory}.
\begin{assumption}\label{assump} The target signal $\bltheta^*$ and class covariances $\blSigma_y$ satisfy the following: 
\begin{enumerate}[topsep=0pt,itemsep=-0.5ex,partopsep=1ex,parsep=1ex]
    \item The $\blSigma_y$ are diagonal.
    \item $\norm{\bltheta^*}_{\blSigma_y^{-1}} \to \zeta_y$ and $\norm{\bltheta^*}_2 \to C$ for some constants $\zeta_y, C>0$.
    \item The empirical joint distribution of $(\sqrt{p} \bltheta^*_j, \lambda_{j}^{(1)}, \lambda_{j}^{(-1)})$ converges to some distribution, denoted $\Pi$:
$\frac{1}{p} \sum_{j=1}^p \delta(\sqrt{p} \bltheta^*_j, \lambda_{j}^{(1)}, \lambda_{j}^{(-1)}) \overset{\scrW_2}{\to} \Pi$. 
\end{enumerate}    
\end{assumption}
Due to rotational invariance, the first assumption is for simplicity and can be relaxed to the setting where the covariances are simultaneously diagonalizable. We also note that the last assumption is quite flexible; as we will see in Section~\ref{sec:theory-insights}, various choices of $\Pi$ can reflect very different types of spectral imbalance between the two classes.
\begin{theorem}\label{thm:main_thm}
    Let $G, H_1, H_{-1} \simiid \scrN(0,1)$ and $(T, L_1, L_{-1}) \sim \Pi$. Under Assumption \ref{assump}, the per-class POE can be written as a function of scalars $\mu_y$, $\alpha_y$, and $\zeta_y$:
    \vspace{-2mm}
    %\begin{equation*} 
    %\begin{aligned}
    %\begin{gather*}
    \[
    \text{POE}(\hat{\bltheta} \given y) = Q\left(\frac{\langle \blthetahat, \blTS \rangle}{\Vert \blSigma_{y}^{1/2} \blthetahat \Vert_{2}}\right)
    \overset{P}{\to} Q\parens*{\frac{\mu_y \zeta_y^2}{\sqrt{\alpha_y^2 + \mu_y^2\zeta_y^2}}},
    \]
    %\end{gather*}
    %\end{aligned}
    %\end{equation*}
    where ($\mu_y$, $\alpha_y$) are found in the optimal solution (if it is unique) of the following min-max problem over 12 scalar variables:
    \begin{equation}\label{eq:minmax}
    \begin{aligned}
    &\min_{\substack{\alpha_y, \tau_y \geq 0\\ \mu_y \in \R}} \max_{\substack{\beta_y, \gamma_y \geq 0 \\ \eta_y \in \R}} r\E  \scrM^{(2)}_{(\cdot)^2}\parens*{\frac{\alpha_y \beta_y H_y}{\gamma_y \sqrt{\delta L_y}} + \frac{\eta_y \alpha_y T}{\gamma_y \zeta_y^2 L_y} ; \frac{\alpha_y r}{\gamma_y L_y}} \\
    &+ \sum_{y \in \{-1, 1\}} \left[\pi_y \E_G \scrM^{(1)}_\scrL \parens*{\mu_y\zeta_y^2 + \sqrt{\mu_y^2\zeta_y^2 + \alpha_y^2}G ; \frac{\tau_y}{\beta_y}} \right.\\ &\left.-\frac{\eta_y^2 \alpha_y}{2\gamma_y \zeta_y^2} - \frac{\mu_y^2 \gamma_y\zeta_y^2}{2\alpha_y}\right.
    -\left.\frac{\alpha_y \beta_y^2}{2\delta\gamma_y} + \frac{\beta_y \tau_y}{2} - \frac{\alpha_y \gamma_y}{2} + \eta_y \mu_y \right]\hspace{-1mm}.
    \end{aligned}
    \end{equation}
    
    % \begin{equation}\label{eq:minmax}
    % \begin{aligned}
    % \min_{\substack{\alpha_y, \tau_y \geq 0\\ \mu_y \in \R}} \max_{\substack{\beta_y, \gamma_y \geq 0 \\ \eta_y \in \R}} r\E  &\scrM^{(2)}_{(\cdot)^2}\parens*{\frac{\alpha_y \beta_y H_y}{\gamma_y \sqrt{\delta L_y}} + \frac{\eta_y \alpha_y T}{\gamma_y \zeta_y^2 L_y} ; \frac{\alpha_y r}{\gamma_y L_y}} 
    % + \sum_{y \in \{-1, 1\}} \left[\pi_y \E_G \scrM^{(1)}_\scrL \parens*{\mu_y\zeta_y^2 +\sqrt{\mu_y^2\zeta_y^2 + \alpha_y^2}G ; \frac{\tau_y}{\beta_y}}
    % \right. \\& \hspace{3 cm} \left.-\frac{\eta_y^2 \alpha_y}{2\gamma_y \zeta_y^2} - \frac{\mu_y^2 \gamma_y\zeta_y^2}{2\alpha_y}\right.
    % -\left.\frac{\alpha_y \beta_y^2}{2\delta\gamma_y} + \frac{\beta_y \tau_y}{2} - \frac{\alpha_y \gamma_y}{2} + \eta_y \mu_y \right]\hspace{-1mm}.
    % \end{aligned}
    % \end{equation}

    % Let $(\alpha_y^*, \mu_y^*)$ be found in the optimal solution to the above problem. Then, for each $y \in \{-1,1\}$ the per-class error satisfies
    % \[
    % \text{POE}(\hat{\bltheta} \given y) \overset{P}{\to} Q\parens*{\frac{\mu_y \zeta_y^2}{\sqrt{\alpha_y^2 + \mu_y^2\zeta_y^2}}} =: \bar{P}_y.
    % \]
\end{theorem}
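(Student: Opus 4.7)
My approach follows the Convex Gaussian Min-Max Theorem (CGMT) of~\cite{thrampoulidis2018precise}, but adapted to handle \emph{two} distinct class-conditional covariances. The plan is: (i) expose independent bilinear Gaussian--parameter couplings via a Gaussian decomposition of the data; (ii) apply CGMT to each class separately; (iii) scalarize the resulting auxiliary problem into the $12$-variable min-max~\eqref{eq:minmax}; and (iv) pass to the asymptotic limit using Assumption~\ref{assump}.

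\textbf{Steps 1--2 (decoupling via CGMT).} Writing $y_i\blx_i = \blTS + y_i\blSigma_{y_i}^{1/2}\blz_i$ with $\blz_i \simiid \scrN(0,\blI_p)$ and stacking samples by class gives independent Gaussian matrices $\blZ_y \in \R^{n_y \times p}$. The ERM becomes
\begin{equation*}
\min_\bltheta\ \tfrac{1}{n}\sum_{y}\sum_{i \in C_y} \scrL\bigl(\ip{\blTS}{\bltheta} + y\,\blz_i^\top \blSigma_y^{1/2}\bltheta\bigr) + r\normt{\bltheta}^2.
\end{equation*}
Introducing slacks $\blu_y = \blZ_y\blSigma_y^{1/2}\bltheta$ with Lagrange duals $\blv_y$ exposes bilinear forms $\blv_y^\top \blZ_y \blSigma_y^{1/2}\bltheta$. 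Since $\blZ_1 \perp \blZ_{-1}$, I apply CGMT to each class separately, replacing each bilinear term by an auxiliary form with fresh Gaussians $\blg_y \in \R^{n_y}$, $\blh_y \in \R^p$ involving only $\normt{\blSigma_y^{1/2}\bltheta}$ and $\normt{\blv_y}$. The two classes are now coupled only through the common $\bltheta$ and the ridge penalty.

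\textbf{Step 3 (scalarization).} For each class $y$ I introduce primal scalars $\mu_y \coloneqq \ip{\blTS}{\bltheta}/\zeta_y^2$ (signal projection along $\blSigma_y^{-1}\blTS$) and $\alpha_y^2 \coloneqq \normt{\blSigma_y^{1/2}\bltheta}^2 - \mu_y^2\zeta_y^2$ (orthogonal component in the class-$y$ Mahalanobis metric), the dual $\eta_y$ enforcing the definition of $\mu_y$ (whence the term $\eta_y\mu_y$ in~\eqref{eq:minmax}), and scalars $\tau_y,\beta_y,\gamma_y$ arising from scalarizing $\normt{\blu_y}$, $\normt{\blv_y}$ in the AO. Partial minimization over $\blu_y$ yields the per-class loss $\pi_y\,\E\,\scrM^{(1)}_\scrL(\mu_y\zeta_y^2 + \sqrt{\mu_y^2\zeta_y^2 + \alpha_y^2}\,G;\,\tau_y/\beta_y)$, consistent with the margin distribution $\scrN(\mu_y\zeta_y^2,\,\mu_y^2\zeta_y^2 + \alpha_y^2)$. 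Crucially, because the ridge $r\normt{\bltheta}^2$ acts on the \emph{common} $\bltheta$, each coordinate $\bltheta_j$ receives linear contributions from \emph{both} classes (through $\blh_{1,j},\blh_{-1,j}$ and $\sqrt{p}\blTS_j$); the per-coordinate minimization then yields the second-order Moreau envelope $\scrM^{(2)}_{(\cdot)^2}$ in~\eqref{eq:minmax}, with arguments and $\tau$-parameters depending on $(\alpha_y,\beta_y,\gamma_y,\eta_y,\zeta_y,L_y)$. The remaining bilinear constants in~\eqref{eq:minmax} are collected cross-terms between primal and dual scalars.

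\textbf{Step 4 and main obstacle.} Using Assumption~\ref{assump}(3), $\scrW_2$-continuity of $\scrM^{(2)}_{(\cdot)^2}$ in its arguments lets me replace the coordinate average over $(\sqrt{p}\blTS_j,\lambda_j^{(1)},\lambda_j^{(-1)})$ by the expectation over $\Pi$. Under the stated uniqueness of the scalar saddle point, CGMT transfers concentration from the AO to the primary problem, giving $(\hat{\mu}_y,\hat{\alpha}_y) \overset{P}{\to} (\mu_y^*,\alpha_y^*)$; plugging into~\eqref{eq:POE} produces the $Q(\cdot)$ expression. The technical crux is precisely the two-covariance coupling: unlike the single-covariance analyses of~\cite{montanari2019generalization,taheri2020asymptotic}, the per-class pairs $(\mu_y,\alpha_y)$ live in \emph{different} Mahalanobis geometries, forcing me to propagate two independent summary variables and two independent auxiliary Gaussians through the CGMT reduction. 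The scalar problem therefore over-parameterizes the single inner product $\ip{\blTS}{\bltheta}$ through the implicit constraint $\mu_1\zeta_1^2 = \mu_{-1}\zeta_{-1}^2$ (enforced by $\eta_1,\eta_{-1}$), and verifying that the requisite convexity/concavity structure survives this coupling --- so that CGMT's deviation bounds yield \emph{joint} concentration of both per-class POEs --- is the main obstacle.
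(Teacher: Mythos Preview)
Your proposal is correct and follows essentially the same CGMT-based route as the paper: per-class application of CGMT (justified there via the multivariate extension of Dhifallah--Lu), scalarization into the variables $(\alpha_y,\mu_y,\tau_y,\beta_y,\gamma_y,\eta_y)$, and the second-order Moreau envelope arising from the per-coordinate minimization over the common $\bltheta$ that couples the two classes through the ridge. The only cosmetic difference is that the paper first makes the decoupling explicit by introducing separate variables $\blthetatl_y=\blSigma_y^{1/2}\bltheta$ linked by a dualized constraint, whereas you keep a single $\bltheta$ and track $\normt{\blSigma_y^{1/2}\bltheta}$ directly; both lead to the same scalar saddle problem.
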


First, we note that the min-max optimization specified in Theorem \ref{thm:main_thm} is straightforward to solve by gradient descent/ascent. Hence, this theorem provides a simple analytical framework to assess the \textit{exact} asymptotic impact of various high dimensional and spectrally-imbalanced GMM settings. Specifically, we can apply the above theorem for different choices of the distribution of features and ground truth (i.e., ~$\Pi$) to characterize the precise impact of various types of spectral imbalance. While these will be the main factors which we study below, the theorem also provides the flexibility to analyze the effect of other parameters like the choice of loss function $\scrL$ or regularization strength $r$. While we do not explore this here, the result also allows for comparison between recently proposed alternative performance metrics in class-imbalanced settings, which are functions of the per-class accuracies \cite{mullick2020appropriateness}.

\vspace{-2mm}
\paragraph{A note on the proof technique:} The objective function in Equation~\eqref{eq:minmax} is quite similar to the min-max program derived for the GMM with equal class covariances in \cite{taheri2020asymptotic}; however, there are a few key differences in our proof technique which allow us to generalize that result and obtain a more refined \textit{per-class} characterization of the error. Firstly, %our objective function has twice the number of decision variables---in the proof, 
we use a \textit{decoupling trick} that introduces two variables (one for each class) for each variable in the objective function of \cite{taheri2020asymptotic}. This allows us to isolate the quantities of interest for each class and apply the CGMT separately for each class. Secondly, in contrast to \cite{taheri2020asymptotic}, our result  features the second order Moreau envelope term, which functions as a way to link the decision variables corresponding to each class.

\subsection{Insights on spectral imbalance}\label{sec:theory-insights}
We can use the predictions of Theorem \ref{thm:main_thm} to precisely study the impact of various types of spectral imbalance on per-class performance. In Figure \ref{fig:theory}, we plot the asymptotic predictions of the per-class error and the class gap for three different types of spectral imbalance which can arise in GMMs. To isolate the effect of spectral imbalance, the predictions in this section fix the amount of data from each class as identical (i.e. $\pi_y = 0.5$), the parameterization ratio $\delta = 2$, regularization parameter $r=0.5$ and the loss as the squared hinge loss, $\scrL(t) = \max(0, 1-t)^2$. In each case, we verify the theoretical predictions via numerical simulations averaged over $50$ independent draws of $n=1000$ training examples. The results indicate that the asymptotic predictions accurately describe the per-class behavior even for moderate values of $n$ and $p$. We provide further details of the setup for these simulations in Appendix \ref{app:sim_details}. We also explore the scenario where sample imbalance and spectral imbalance are \textit{both} present in Appendix \ref{app:sampleimbalance}.

\vspace{-2mm}
\paragraph{(A) Impact of eigenvalue scaling:} In this example, we vary the relative scale of some of the eigenvalues of Class $1$ while fixing the target signal as $\sqrt{p}\bltheta^*_j = 1$ for all $j$. Specifically, we fix the eigenvalues of Class -1 to be either $0.5$ or $2$, in equal proportion. Then, we set all the eigenvalues for Class 1 to be a multiplicative scaling $s \geq 1$ of the eigenvalues of Class -1. We can see in Figure \ref{fig:theory}(A) that as $s$ increases, the performance of Class 1 degrades severely while that of Class -1 also degrades, but to a lesser extent. In line with the general intuitions in Section \ref{sec:toyex}, this results in a net increase in the class gap.

\vspace{-2mm}
\paragraph{(B) Impact of eigenvalue decay rate:} Fixing $\sqrt{p}\bltheta^*_j = 1$ for all $i$, we again consider a bi-level covariance model, where the eigenvalues of each class take one of two values: $2$ or $0.5$. We fix the proportion of larger eigenvalues in Class -1 to be $0.5$ while varying the proportion $p \in [0,1]$ of larger eigenvalues for Class $1$.   This example indicates that there can be an important trade-off between overall performance and the class gap: the choice $p=0$ (all small eigenvalues) yields the best per-class performance for both classes while $p=0.5$ minimizes the class gap.

\vspace{-3mm}    
\paragraph{(C) Impact of alignment with the target signal:} The previous notions of spectral imbalance ignored the relative shape of each marginal distribution with respect to the target signal $\bltheta^*$. We now consider a situation where the two class covariances are misaligned, so that either $(\lambda_j^{(1)}, \lambda_j^{(-1)}) = (0.5, 2)$ or $(\lambda_j^{(1)}, \lambda_j^{(-1)}) = (2, 0.5)$.  We then increase the magnitude of $\bltheta^*_j$ by a factor $a$, \textit{only} in the directions $j \in [p]$ where $\lambda_j^{(-1)} = 2$. Here, as we increase the parameter $a\geq1$, the overall signal strength increases, so the global performance improves. However, due to the differences in the covariance alignment, the improvement is not uniform between the two classes, and Class 1, which has smaller variance in the directions of increased signal strength, improves more quickly. Interestingly, for large $a$ the class gap decreases as the effect of large signal strength outweighs the differences between the class spectra.

% Experiments
%\section{Understanding the interplay between spectral imbalance and class bias in features of pretrained models}
%\vspace{-3mm}
\section{Spectral imbalance in pretrained features}
\label{sec:experiments}

In this section, we build on our theoretical findings to explore how the framework of spectral imbalance may be used to understand and mitigate class bias in the image representation space of pre-trained models.

% \vspace{-3mm}
\subsection{Experimental setup}

% In this work, we are interested in understanding the spectrum imbalance in the representation space of pretrained models, especially the high-dimensional latent space of visual representations.

% \vspace{-3mm}
\paragraph{Settings.}~
We consider a common scenario in representation learning, where encoders are used to extract features, and linear classifiers are trained on these features for downstream evaluation.
Formally, a dataset $D=\left\{\left(\mathbf{x}_i, y_i\right)\right\}_{i=1}^n$ of $n$ images is used to train a network consisting of a pretrained encoder $f_\omega(\cdot)$ and a linear classifier $g_\theta(\cdot) = \langle \theta, \cdot\rangle$. The encoder produces a latent representation for each image following $\mathbf{z}_i = f_\omega(\mathbf{x}_i)$. In this work, we aim to understand the properties of the resulting representation space $\mathcal{Z}=\{\mathbf{z}_i\}_{i=1}^n$, and how the (estimated) spectral properties of the distribution on $\mathcal{Z}$ can be correlated to the classification results of the classifier $g_\theta(\cdot)$ for each class. 
Note that the role of the latent features $\{\mathbf{z}_i\}_{i=1}^n$ in representation learning is directly comparable to the role of the original features $\{\mathbf{x}_i\}_{i=1}^n$ in the preceding GMM; this is because in both cases the downstream model is linear.
% We note that this is one of the most common settings in representation learning, where encoders are used to extract features, and linear classifiers are later trained to perform evaluation.
% , widely used in transfer learning settings and pretraining-and-then-finetuning settings.

\vspace{-3mm}
\paragraph{Empirical estimation of the class-dependent spectra.}

To obtain the estimated eigenspectrum, we first 
% use the following equation to 
estimate the empirical class-dependent covariance matrix as:
% eigenspectrum based on the available training data:
%\begin{equation}
\[
\blSigma_C\left({f}_\omega\right)=\frac{1}{|\Omega_C|} \sum_{i \in \Omega_C} (\mathbf{z}_i - \overline{\mathbf{z}}_{c}) (\mathbf{z}_i - \overline{\mathbf{z}}_{c})^T,
\]
%\end{equation}
where $C$ is a class, $\Omega_C$ denotes the set of examples with label $y_i=C$, and $\overline{\mathbf{z}}_c$ denotes the empirical mean of features with class $C$.

The eigenvalue decomposition of the  covariance matrix is then computed as $\blSigma_C = \mathbf{V}_C \mathbf{\Lambda}_C \mathbf{V}_C^{-1}$, where $\mathbf{\Lambda}_C$ is a diagonal matrix with nonnegative entries $\lambda_i^{(C)}$, and the columns of $\mathbf{\mathbf{V}}_C$ correspond to the eigenvectors of $\blSigma_C$. Without
loss of generality, we assume that $\lambda_1^{(C)} \geq \lambda_2^{(C)} \ldots \geq \lambda_m^{(C)}$, where $m$ is the rank of $\blSigma_C$. The resulting set of eigenvalues $\lambda_i^{(C)}$ is the (empirical) eigenspectrum of class $C$.

% and sort the corresponding eigenvalues to form the final eigenspectrum.

%\vspace{-3mm}
\paragraph{Pretrained networks.}
In what follows, we evaluate the representation space of a diverse set of classification models to understand their class bias.
% and aim to understand how much they induce class bias in a supervised classification setting. 
Specifically:
\begin{itemize}[noitemsep]
     \item 
    \textbf{Residual networks:} 
    Using residual connections to bridge convolutional blocks, ResNets are a class of deep convolutional neural networks (CNNs) that is instrumental in many visual recognition tasks \cite{he2016deep}. % ResNet, as it 
    % addresses the vanishing gradient problem,
    % allows very deep networks to be trained effectively. 
    % It has good properties, and was revisited multiple times in modern studies because they are good XXX. In this work, we study ResNet18, ResNet50, and ResNet101.
    \item 
    \textbf{Improved CNNs:} % Many research works propose variants of convolutional architectures to improve ResNets. Among these efforts, 
    DenseNet \cite{huang2017densely} and EfficientNet \cite{tan2019efficientnet} are two representative convolutional architectures that aim to improve ResNets.
    They design different strategies to build connections across layers to improve the % parameter-efficiency as well as 
    information flow inside the networks.
    % that are more parameter-efficient. DenseNet improves the flow of information and gradients throughout the network by densely connecting all layers directly with each other. EfficientNet designs a compound coefficient that uniformly scales the depth, width, and resolution of CNNs and thus improves performance with fewer parameters.
    \item 
    \textbf{Vision transformers:} % Recently, the transformer architecture 
    % that uses self-attention to study correlation 
    %has achieved good performance in the vision domain \cite{dosovitskiy2020image}. 
    The vision transformer splits an image into patches and uses self-attention to study their interactions \cite{dosovitskiy2020image}. ViT aggregates global information at earlier layers, creating significantly different representations than CNNs \cite{raghu2021vision}.
    \item 
    \textbf{Transformers without self-attention:} Recent studies show that self-attention is not required for obtaining good representations in vision. Along this line, we study MLP-Mixer \cite{tolstikhin2021mlp} and PoolFormer \cite{yu2022metaformer}, which uses multi-layer perceptrons and 
    % to mix information across patches of images, and , which uses 
    average pooling to mix information across patches, respectively. %are two interesting transformer architecture that gets rid of self-attention.
\end{itemize}

We found all checkpoints of the pretrained models in Torchvision \cite{marcel2010torchvision} and timm \cite{rw2019timm} (see details in Appendix~\ref{app:exp}).
For all experiments, we use the standard ImageNet ILSVRC 2012 dataset \cite{deng2009imagenet}, which contains $C=1000$ object classes with an average of 1281/50 training/validation images per class. We also provide results on smaller datasets in Appendix~\ref{app:smaller_datasets}.

\begin{figure}[t!]
% \vskip 0.2in
\begin{center}
\centerline{\includegraphics[width=\columnwidth]{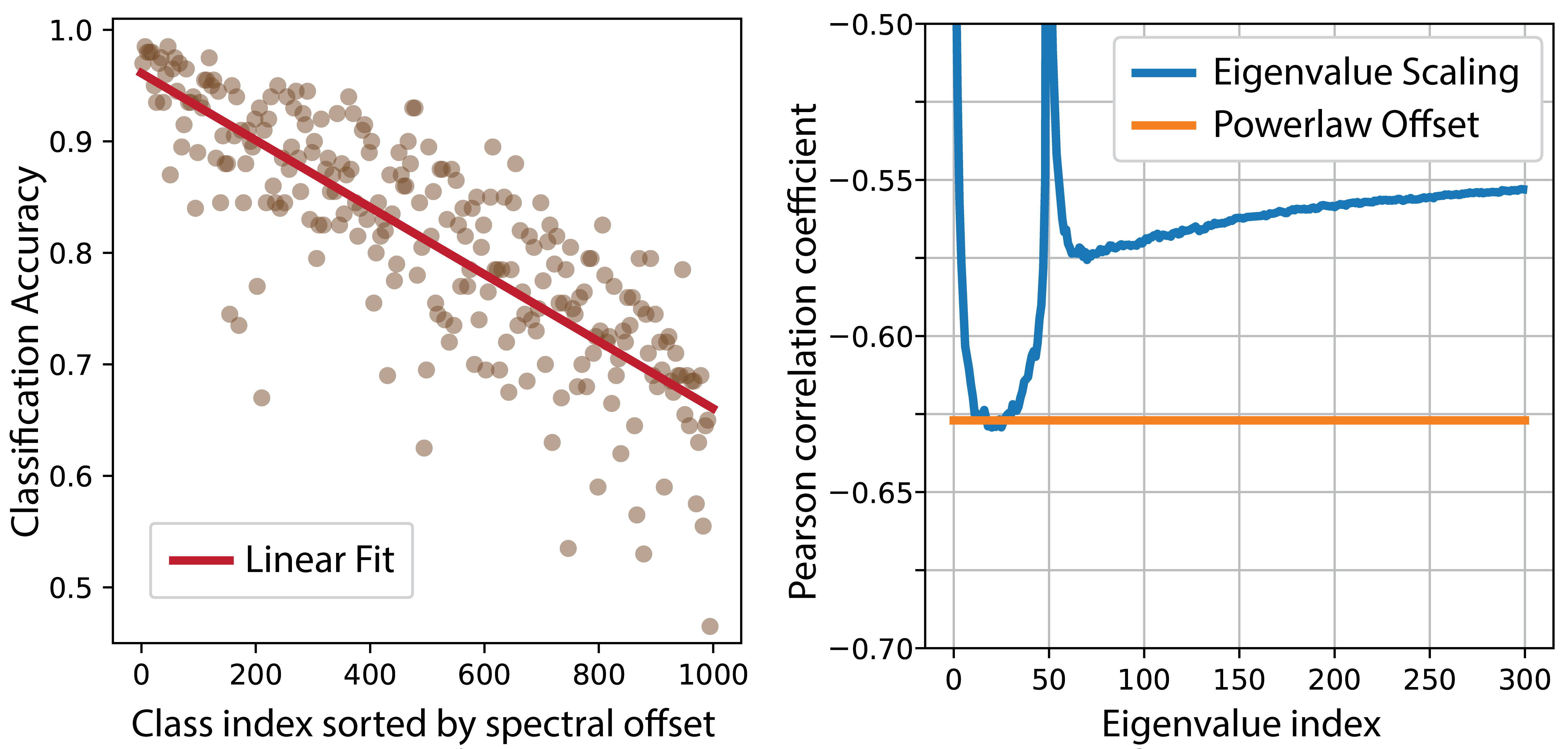}}
%\vspace{-3mm}
\caption{{\em Examining the relationship between class-dependent spectra and performance.} 
%Strong negative correlation exists between spectrum values and classwise accuracy, depicted for ResNet-50 on the validation set of ImageNet.
% in ResNet-50 as an example. 
(left) The spectral offset of each class vs. their classification accuracy, computed for ResNet-50 on the validation set of ImageNet.
%A linear fit between accuracies and the spectral offset parameter, giving an average square error of 0.06.  
(right) Pearson correlation coefficient between class accuracy and individual eigenvalues (blue) and the power law offset (red).}
\label{fig:correlation}
\end{center}
%\vspace{-3mm}
\vskip -0.2in
\end{figure}

\subsection{Spectra correlate with class-wise performance}\label{sec:si-correlation}

\paragraph{Individual eigenvalues.} We first analyze the correlation between the class-wise performance and the magnitude of individual eigenvalues of each class.
To do this, we compute the Pearson correlation coefficient (PCC) between the set of eigenvalues $\{\lambda_j^{(c)}\}$ (validation set) and the set of across-class accuracies, $\{\text{Accuracy}(c)\}$, for every eigenvalue index $j$. In Figure~\ref{fig:correlation}, 
we observe that the class-wise accuracy shows a strong negative correlation with most individual eigenvalues (right), in line with the theoretical results on eigenvalue scaling in Figure \ref{fig:theory}(A). 
Across the 11 models examined, we obtained an average maximum negative correlation of -0.69. 
Interestingly, unlike the examples in Section \ref{sec:theory-insights}, the first few eigenvalues do not show a strong correlation for most models, potentially illustrating unique properties of the features learned by real-world encoders. 

% the first few eigenvalues surprisingly do not show a strong correlation, different from the theoretical results. 
% We hypothesize that it is the major direction for classification, like the first PC in PCA?

% Interestingly, XXX with theoretical results, the first few eigenvalues does not matter, and all rest eigenvalues strongly correlates with the performance of individual classes \ref{fig:emperical1}(B,C).

% \vspace{-3mm}
\paragraph{Power law decay offsets.} Natural images are known to exhibit a power law decay in their eigenvalues \cite{ruderman1994statistics}. 
Recent work has shown that this also holds in visual representations in pretrained models \cite{ghosh2022investigating} and can be used to examine the quality of learned representations. 
Thus, we fit the class conditional spectrum to $\lambda_i = a i^{-b}$, and observe how the parameters $(a, b)$ indicate the class-wise accuracy. Interestingly, we observed that the scale $a$ (the \textit{offset} on a log-scale) correlates most strongly with class accuracy (an average of -0.68 across 11 models), while the power (i.e. rate of decay) $b$ does not strongly correlate with class-wise performance. 
This phenomenon demonstrates that there is a nuanced relationship between the specific distribution and decay rate of eigenvalues (cf. Figure \ref{fig:theory}(B)) and the class performance. We provide further details and also measure the correlation between other decay rate metrics and class-wise performance in Appendix \ref{app:exp_pcc_score}.

\subsection{Spectral imbalance %can lurking within 
in your pretrained model}\label{sec:si-encoder}

% {\bf The worst classes are not always the worst classes.}
% The answer is, surprisingly, no. 
Next, we conduct a systematic investigation of the per-class accuracy discrepancies across encoders and show that spectral imbalance is often a measurable property of the encoder.
% We systematically investigated how the ranking of the class accuracies would change across encoders. Surprisingly, which classes perform better has very large variability across different encoders, showing that class bias is not only dependent on the intrinsic property of the training dataset, but could be a measurable property of the encoder. 
Surprisingly, out of the 55 unique pairs of encoders that we consider, an average of 31\% classes had a ranking (with duplicates) difference over 100 out of $C=1000$ total classes, with the input data held constant.
% The large variability of per-class accuracy across encoders emphasizes the importance of measuring the spectral imbalance property of the encoder. Thus, we propose spectrum variation score.
To better understand the large variability of per-class accuracy across encoders, we propose the \textit{Spectral Quantile Score} (SQS), which measures the spectral imbalance between classes in a given encoder.
% Based on this observation, we propose to use the spectrum imbalance property to measure the differences between different encoders.

{\bf Spectral Quantile Score.}~
Writing the set of per-class accuracies 
%$\{\text{Accuracy}(c)\}$ 
as a sorted list of numbers $\{{Acc}_1, \dots, {Acc}_C \}$ in descending order, we define the \textit{empirical} class bias score of an encoder as:
%\begin{equation}
\[
 \textit{Class bias} = \left(\sum_{k=1}^{L} {Acc}_k - \sum_{k=C-L+1}^{C} {Acc}_k \right)/ \left(L \times \overline{Acc} \right),
\]
%\end{equation}
where $C$ is the total number of classes, $\overline{Acc}$ is the average accuracy over all classes, and $L$ is a selected cutoff. In our experiments on ImageNet,  we set $L=100$ throughout.

To compute the imbalance across classes in their spectra, we can similarly sort our per-class spectral metric of interest (i.e., spectral offset, specific eigenvalues) as $\{s_1, \dots, s_C \}$ and compute the
% list $\{s_j\}$, where $i$ and $j$ are independent rankings, 
the Spectral Quantile Score (SQS):
\begin{equation*}
 \text{SQS} = s_{L}/s_{1}. %\frac{s_{L}}{s_{1}},
 % \text{SQS} = \operatorname{Entropy}(\mathcal{S}) = -\sum_{s_j \in \mathcal{S}} p(s_j) \log p(s_j)
\end{equation*}
%The SQS measures the ratio between the $L$ largest spectral measure and the smallest class spectra.
%where $\{s_j\}$ are normalized to have $\sum s_j = 1$.
%The SQS represents the $(1-L/C)$-quantile
%$(1-\frac{L}{C})$-quantile 
%of the $\{s_k \}$ after max normalization. 
Again, we set $L=100$ for our experiments.
As shown in Figure \ref{fig:emperical_main}(A), we found the SQS of different encoders is indicative of their empirical class bias (0.90 PCC), where latent spaces with a higher SQS would likely give higher empirical class bias.
%providing an accurate evaluation metric for different latent spaces. % with 6 different types of architectures.
% different spectrum imbalance and that is indicative of the classification performance imbalance. The correlation between Class bias score and SVS gives 0.91 pearson correlation, across 11 different encoders with 6 different types of architecture. 
The strong correlation between SQS and empirical class bias emphasizes the importance of investigating the spectra imbalance in encoders, which accurately reflects and evaluates the encoders' empirical class bias.
% demonstrates that measuring spectrum imbalance could be important because 
% (1) Spectra imbalance accurately reflects the encoders' empirical class bias, which is critical for classification 
% important quantity in terms of 
% fairness; 
Moreover, as the worst class performance is often strongly associated with the final average performance of an encoder \cite{arjovsky2022throwing}, understanding spectral imbalance provides an additional lens into the representation space without training actual classifiers.

% class bias is often closely associated with final average performance because the worst classes have more room for improvement.

% \vspace{-3mm}
\paragraph{Testing for class disparity in a new encoder.} Next, we investigate how class-dependent spectra can be used to estimate the performance of a new encoder. % We perform a formal statistical test to evaluate the change of classwise accuracy between a pair of encoders. 
Following \cite{balestriero2022effects}, we used a one-sided Welch’s t-test \cite{welch1947generalization} to validate our hypothesis that the per-class accuracy is significantly lower when their training set spectra have a higher offset parameter $a$.
% Following \cite{balestriero2022effects}, we validate the hypothesis that the per-class accuracy is significantly lower when the eigenvalue is higher for those classes using a one-sided Welch’s t-test \cite{welch1947generalization}. 
We define the random variable as the accuracy difference between two encoders $\Delta_{f_1, f_2}(c) \coloneqq \operatorname{Acc}_{f_1}(c) - \operatorname{Acc}_{f_2}(c)$, and  $\Delta_{-}$ represents the set of classes with spectral offset difference $a_{f_1}(c) - a_{f_2}(c) < 0$ being negative. We define the null hypothesis as $H_0 = \mathbb{E}[\Delta] > \mathbb{E}\left[\Delta_{-}\right]$.% eigenvalue difference.
% eigenspectrum can be used to directly compare encoders. 
% Need to update below numbers.
Intuitively, rejecting the hypothesis means the classes with a higher spectral property would give lower class-wise accuracies.
We obtain that there is enough evidence to reject this hypothesis with 95\% confidence for 98 out of 110 pairs of encoders; and 99\% confidence for 80 out of 110 pairs of encoders.
% 56 out of 72 pairs of models. 
% We note that under the noise from the estimation and accuracy, this shows the potential on how spectrum could be used as an important metric to predict per-class accuracy changes.
This demonstrates that spectral properties might reliably predict class-wise accuracies for new encoders, despite the noise in spectrum estimation.

\begin{figure*}[!t]
    \centering
    \includegraphics[width=0.97\textwidth]{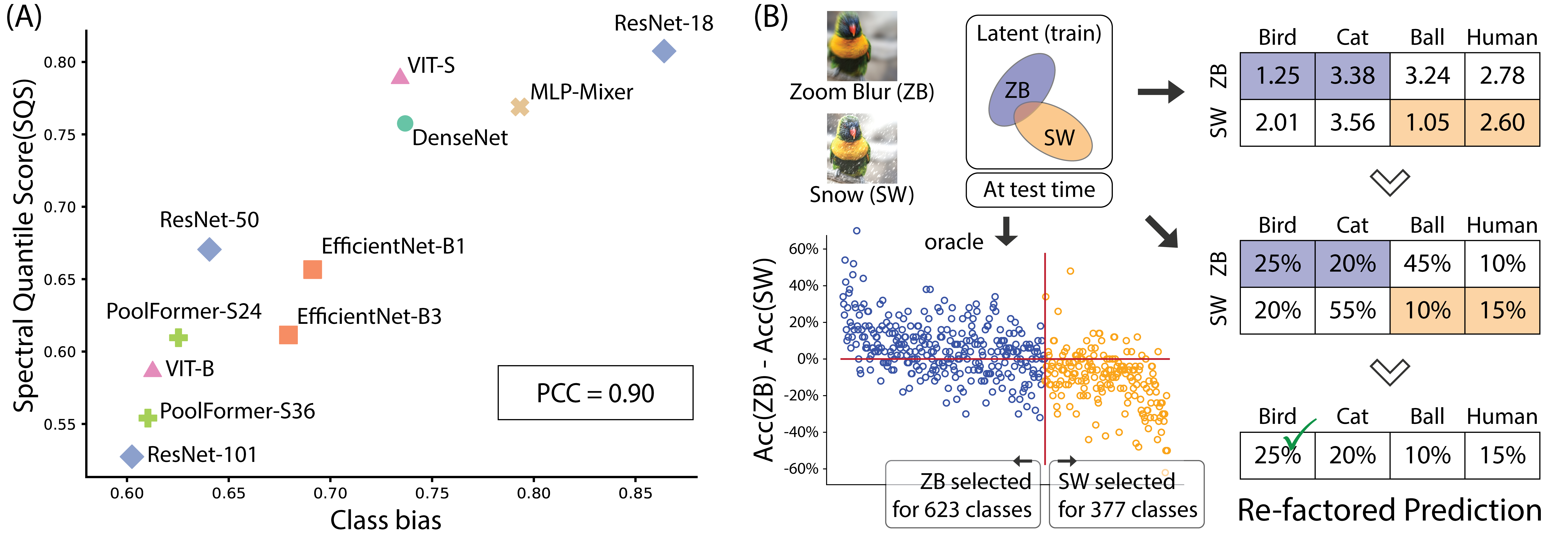}
    %\vspace{-4mm}
    \caption{{\em Selecting model and data with spectra.} (A) Given different encoders, the Spectral Quantile Score (SQS) is predictive of the empirical class bias of the representation spaces. Note that SQS is estimated on training set without deploying the classifier(s). 
    (B) Given different data augmentation plans, we can use the class-level spectra to assign data to their optimal augmentation plan in a class-dependent way. This assignment can be used at test time to re-factor the prediction and boost performance across all classes without any re-training or modification to the encoder.}
    \label{fig:emperical_main}
    %\vspace{-3mm}
\end{figure*}

% \subsection{The effect of augmentations on spectral imbalance and class gap}

% \subsection{Spectra difference helps you select your data}
%\vspace{-2mm}
\subsection{Combine augmentations using spectral information}
\label{exp:data_augmentation}
%\vspace{-1mm}

% Previous works have identified the need for class-dependent augmentations \cite{balestriero2022effects}. Thus, we wanted to understand how much augmentations impact the spectral properties of data and whether we can use spectra to select class-specific augmentations.
Recent work identified that commonly used data augmentations can in fact worsen class gap~\cite{balestriero2022effects}. Thus, we sought to study the effect of \emph{data augmentation} on class disparity and ask whether different augmentations can be used to alleviate  gaps in class accuracies.

%We examine whether augmentations that are adapted to data spectrum could be used to mitigate class disparity, and provide some results of possible promise.

% initiate a study on how spectra can be used to guide the selection of data augmentations, and how this may help improve model performance without any extra training.

%Prior work has studied the impacts of augmentations on class gap (cite). Thus, we wanted to see whether our framework could be used to study this phenomenon. 
%Specifically, we ask whether the spectral properties across classes can be used to design class-dependent augmentation plans?

%The quality of training data matters, especially in the age of pretrained models. In this subsection, we investigate how differences in per-class spectra can be used to guide the selection of data augmentation plans and how to improve model performance without any extra training.

%\vspace{-3mm}
\paragraph{Do different augmentations impact spectral imbalance?}~ 
To begin, we supplement the results of Section~\ref{sec:si-encoder} by showing that spectral imbalance correlates strongly with class gap not only across encoders, but also across data transformations.
Following \cite{hendrycks2019benchmarking}, we generated 16 versions of ImageNet where the images are transformed differently. 
%augmented % corrupted 
% with different classes of transformations. %(e.g., ZoomBlur, Snow, Shot Noise).
Specifically, given a sample ${\bf x}$ from our train/test dataset $\mathcal{D}$, we start by transforming every image with the 16 candidate augmentations as $\mathbf{x}^{\ell} = t_{\ell}(\mathbf{x} )$, where $t_{\ell} \in \mathcal{T}_{\ell}$ is a sampled augmentation operation from the $\ell^{\rm th}$ augmentation plan. After feeding these transformed images through our pretrained encoder $f_{\omega}$, we obtain 16 different sets of latents $\{ {\bf z}_{\ell} = f_{\omega}(\mathbf{x}^{\ell}) \}_{\ell=1}^{16}$.
Then, our prediction corresponding to the $\ell^{th}$ augmented test sample is given by the $C$-dimensional logit output $\mathbf{L}_{\ell} = g_{\theta}({\bf z}_{\ell})$.
% \vspace{-3mm}

As in Section~\ref{sec:si-correlation}, we fit a power law to all classes and use the offset parameter $a$ as the studied spectral property.
Between the training set spectral properties and the testing set class bias, we compute the SQS score across all 16 types of augmentations and obtain a PCC of $0.90$, demonstrating strong positive correlation.
% between the CSE and the class gap. 
% In all of these cases, we observe a strong powerlaw decay and thus we fit a power law to all classes;
% the offset parameter $a$ in the power law decay is strongly correlated with class-wise accuracy, 
Within each class, the spectral property is also
% with 
statistically significant for $78.9\%$ out of $1000$ classes. See Appendix~\ref{app:imagenetc_details} for more details.
% Across all of the 16 possible augmentation classes, we computed the CSE score and found that there is a 0.90 PCC between the CSE and the class bias of all plans. This shows remarkably high correlation which suggests that the augmentations have a strong impact on the spectral entropy across classes.
% Furthermore, we also find that the power law offset yields reliable prediction of the performance of each class, 
% and the correlation coefficient is statistically significant for 
% 78.9\% out of 1000 classes.

%\vspace{-3mm}
% \paragraph{Using spectra to select class-dependent augmentations.}~
\paragraph{An ensembling method to combine augmentations using spectral information.}~
% Using the insights derived from the previous experiments, 
Based on the observed correlation between imbalance of the spectral offset and performance for each class, we propose a \textit{spectral re-factoring} method that adaptively combines the predictions across different augmented views at \emph{test time} to improve performance.
To do this, we extract the spectral offset parameter $a$ for each class ($C=1000$) across all candidate augmentations ($K=16$) on the training dataset. This results in a matrix $\mathbf{M} \in \mathbb{R}^{K \times C}$, where each entry $M_{\ell,c} = a_{\ell,c}$ denotes the offset parameter for all latents $\{ {\bf z}_{\ell} \}$ in class $c$.% --- we use the power law offset parameter $a_{\ell,c}$ corresponding to augmentation $\ell$ and class $c$.

Consider a test sample $\mathbf{x}$.
Suppose that an `oracle' could give us its true class $c^*$.
Then, we would use the prediction corresponding to the augmented view that achieves minimum `spectral score' of class $c^*$, i.e. use the augmentation $\ell^*$ that minimizes $M_{\ell,c^*}$.
We depict this idea in Figure~\ref{fig:emperical_main}(B) with two candidates Zoom Blur (ZB) and Snow (SW).
% using the spectral power law offset parameter $a$ as our spectral score, i.e. $M_{\ell,j} = a_{\ell,j}$. 
In this case, ZB achieves a smaller spectral score for 623/1000 classes, while SW acheives a smaller spectral score for the other 377/1000 classes.
Thus, our oracle would select
the predictions of ZB for the 623 classes with smaller spectra and SW for the next 377 classes. This approach would yield significant improvement across all classes: $6.0\%$ on 623 classes (blue) and $10.2\%$ on 377 classes (yellow).

Of course, in reality we do not have access to the class information $c^*$ to apply the `correct augmentation'. Instead, we propose an ensembling method for assigning final class probabilities based upon the spectral property matrix ${\bf M}$:
% . In particular, we propose the following procedure (described in steps):
\begin{itemize}[leftmargin=*,label={}]

\item {\bf(Step 1)}\hspace{3mm}Pass  $K$ candidate augmentations of each sample through $f$ and $g$ to produce a multi-view logits matrix $\mathbf{L} \in \mathbb{R}^{K \times C}$, where $\mathbf{L}_{\ell}$ corresponds to the logits generated by prediction on augmented data $\mathbf{x}^{\ell}$.
% \vspace{-1.5mm}

\item {\bf(Step 2)}\hspace{3mm} Create a new logit vector  $\hat{{\bf l}}_c = {\bf L}_{\ell^*(c), c}$ where $\ell^*(c)$ \emph{minimizes} $M_{\ell,c}$. Intuitively, we select the logit corresponding to the `optimal' augmentation for class $c$, based upon the spectral scores in $\bf M$. 
% \vspace{-1.5mm}

\item {\bf(Step 3)}\hspace{3mm} Classify test data $\mathbf{x}$ as $c(\mathbf{x}) = {\arg \max}_{c \in [C]} \hat{L}_c$.
\end{itemize}

This re-factored prediction is depicted in Figure~\ref{fig:emperical_main}(B).
Essentially, the procedure first \emph{estimates} the correct augmentation from the training set for each class to use in steps 1 and 2, selects the corresponding logit for that class, and uses these adaptively selected logits in a standard classification procedure in step 3.
This ensembling procedure improves Zoom Blur ($65.01\%$) and Snow ($65.12\%$) to a combined accuracy of $67.616\%$,
giving an average performance improvement of $2.55\%$ and an average class gap improvement of $3.52\%$  \textit{without any re-training required}.

Using the same approach, we repeat the experiments on all combinations of augmentations provided by \cite{hendrycks2019benchmarking} in Appendix~\ref{app:imagenetc_details}. We show that, across 15x16 = 240 runs, our proposed ensemble method can stably improve the average performance across two augmentations and the worse augmentation by 2.6\% and 5.9\% respectively.

\section{Related work}

{\bf Class-dependent generalization and class bias.}
Recently,~\cite{balestriero2022effects,kirichenko2023understanding} put forward the interesting observation that data augmentations can create class bias, and that the issue worsens the stronger the augmentation(s) is.
In contrast to our work, these works focus exclusively on the impact of data augmentation and do not identify or characterize the role of spectra in class bias.
They propose a different hypothesis that is less quantifiable; in particular, that classes that are inherently more `fine-grained', or harder to distinguish, are most affected.
Comparatively, our proposed measures of spectral imbalance are precise, quantitative, and demonstrated to correlate with class bias through formal statistical analysis.

% Recently, class-dependent generalization is studied in \cite{balestriero2022effects}, and more recently in \cite{kirichenko2023understanding}. However, these works focus on data augmentation and how different augmentations affect class-wise performance; instead of understanding it as a property of the encoder.
% \cite{lei2019data} provides an analysis of data-dependent generalization bounds for multi-class classification. Wang, et al. analyze generalization for multi-class for overparameterized models \cite{wang2021benign}.

%Indeed, many recent works on benign overfitting in linear regression and classification~\cite{bartlett2020benign,muthukumar2020class, cao2021risk, wang2021binary} have shown that the properties of the (overall) data spectrum play a vital role in achieving good overall performance, particularly in the overparameterized regime. 

{\bf The role of spectra in generalization.} Many recent works have highlighted  the importance of the data spectrum in generalization \cite{bartlett2020benign,muthukumar2020class, lin2022good}, especially in the overparameterized regime. 
%behavior of  linear classifiers in GMMs. 
Specifically, the works~\cite{chatterji2021finite, wang2021binary, cao2021risk, wang2021benign} characterize the behavior of overparameterized linear classifiers in GMMs through non-asymptotic, finite-sample bounds and show good generalization under favorable spectral properties of the data. However, they all assume equal class covariances and do not explicitly consider the effects of spectral imbalance.
% provide finite-sample bounds on the generalization error under the Gaussian mixture model and reveal that a \textit{benign overfitting} phenomenon can occur in certain settings. While these results reveal that the overall data spectrum is crucial for good generalization, they assume equal class covariances and do not explicitly consider the effects of spectral imbalances. 
Another line of work provides exact asymptotic bounds on the classification error achieved by linear classifiers in the GMM \cite{deng2022model, mai2019large, loureiro2021learning,kini2021label,taheri2020asymptotic,javanmard2022precise}. Of these, only \cite{loureiro2021learning} allow for different covariances between classes. Their work studies a more general multiclass setting, but does not systematically study per-class generalization. By contrast, we focus on a simpler binary classification setting and use an alternative proof technique based on Gaussian comparison inequalities to obtain a much simpler expression that is easier to numerically evaluate, and thus allows us to model various spectral imbalances.
Finally, our empirical results estimate the eigenspectra either on validation/test data, on augmented training data, or on training data but non-zero training error.
This circumvents ``neural collapse" on training data~\cite{papyan2020prevalence} which would yield misleading estimates of eigenspectra.

% Overparameterization is a common situation in modern deep learning study. Recent study discovered the {\it benign overfitting} phenomenon \cite{bartlett2020benign}, which studies the distribution of eigenspectrum for linear regression problem and its effect to classification performance.
% Many more recent works all study the importance of eigenspectrum, and how its decaying structure has an important effect to the classification performance. (include more details about recent works)
% However, most discussion are limited to the training and evaluation of linear networks. 
% % because its corresponding effect is theoretically deducable.
% It remains unclear how the corresponding theoretical results apply for the training of modern neural networks.
% the properties of the eigenspectrum can be used to evaluate or measure the pre-trained features of modern, large, and deep networks \cite{ghosh2022investigating}; and how the corresponding theoretical results apply when training linear classifiers based on the produced pre-trained features. The latter is a common situation in transfer learning or pre-train and then fine-tune learning regime.

{\bf Evaluating representations.}~Assessing the quality of representations is a critical aspect of representation learning. 
% This is because it does not need training classifiers or when the quality and quantity of real-world data are subpar.
The importance stems from its ability to bypass the need for training classifiers, especially when the quality and quantity of downstream data are lacking \cite{martin2021predicting,liu2022seeing,liu2023frequency}.
% without training actual classifiers. This is to avoid using real-world data when their quality and quantity are subpar.
Representations can be studied from the perspective of manifold analysis~\cite{hauser2017principles}; Previous works show that estimated geometric properties like intrinsic dimension correlate with performance \cite{ansuini2019intrinsic,cohen2020separability,doimo2020hierarchical,valeriani2023geometry}.
Recently, \cite{ghosh2022investigating, agrawal2022alpha} uses the power law decay of eigenspectrum to measure the quality of representations, 
and follow-up works show that
spectral properties can also be used to improve self-supervised learning \cite{he2022exploring,zhang2023spectral,weng2023modulate}. 
% However, their motivation is neural activation in human brain (is that true?), it seems the architectural difference is pretty high, and their approach can only be used to evaluate but not to improve in any sense.
In contrast to these works, we especially evaluate the spectral properties of the  representation space from the perspective of class bias and class-dependent generalization error. 

The recent works \cite{ma2022delving, ma2023curvature} also provide a geometric interpretation of the class covariances of learned representations and connect these differences in geometry to differences in performance, both for long-tailed and balanced datasets. Our work provides an alternate and complementary perspective, studying the decay rate and relative magnitude of eigenvalues, rather than the ``volume'', which is related to the product of eigenvalues \cite{ma2022delving} or the Gauss curvature \cite{ma2023curvature} of the learned features. Compared to these works, we provide an explicit theoretical framework for studying spectral imbalance in high-dimensional GMMs, and we focus our attention on pretrained representations and sample-balanced datasets. We compare our empirical findings to \cite{ma2022delving} in more detail in Appendix \ref{app:compare_volume}.
\section{Discussion}
%\vspace{-1.5mm}

In this work, we introduced the concept of spectral imbalance as a way to characterize class-dependent bias. We studied spectral imbalance in theory and in practice, and provided a new framework for studying the relationship between class and spectral imbalance.
%We have provided an analysis of the class-dependent effects of learning and proposed techniques for using spectral imbalance measures to predict the performance for different augmentation or transformations of the data.
% how to address this in training.
While we identify spectral imbalance as \emph{one of} the factors influencing class disparity, there could be others which we have as yet not identified.
Moreover, our theory assumes linear models 
and our experiments assume pre-trained encoders.
In the future, we would like to understand how feature learning in neural networks could lead to spectral imbalance in the first place. We hope such understanding will help us effectively
mitigate the class bias issue, either through re-training or post-hoc manipulation of pre-trained features.

\section*{Acknowledgements}

% {\color{blue}{Placeholder. PIs, please fill in!}}
We are grateful to the anonymous ICML reviewers for valuable feedback.
This project is supported by an NSF Graduate Research Fellowship (DGE-2039655),
% This project is supported by 
NSF award (IIS-2039741), NSF award (CIF:RI:2212182), NSF
CAREER awards (IIS-2146072, CCF-223915), as well as generous gifts from the Alfred Sloan Foundation, the McKnight Foundation, the CIFAR Azrieli Global Scholars Program, Amazon Research and Adobe Research.

% \vspace{-2mm}
\section*{Impact Statement}
Our work provides new insights into how to mitigate class biases by measuring the spectral properties of representation space. The presented results suggest that large-scale pre-trained models may possess inherent biases in their decision-making processes. Our method provides researchers with tools to better understand and address these issues.
While this work focuses on evaluating and mitigating the biases in discriminative tasks, extending this approach to generative models and tasks can have major societal benefits regarding fairness and auditing of pre-trained models. 

\bibliography{fairDA}
\bibliographystyle{icml2024}

%%%%%%%%%%%%%%%%%%%%%%%%%%%%%%%%%%%%%%%%%%%%%%%%%%%%%%%%%%%%%%%%%%%%%%%%%%%%%%%
%%%%%%%%%%%%%%%%%%%%%%%%%%%%%%%%%%%%%%%%%%%%%%%%%%%%%%%%%%%%%%%%%%%%%%%%%%%%%%%
% APPENDIX
%%%%%%%%%%%%%%%%%%%%%%%%%%%%%%%%%%%%%%%%%%%%%%%%%%%%%%%%%%%%%%%%%%%%%%%%%%%%%%%
%%%%%%%%%%%%%%%%%%%%%%%%%%%%%%%%%%%%%%%%%%%%%%%%%%%%%%%%%%%%%%%%%%%%%%%%%%%%%%%
\newpage
\appendix
\onecolumn

\section*{Appendix}
\setcounter{figure}{0}

\section{Theoretical Results for the Gaussian Mixture Model}\label{app:theory}

\subsection{Proofs}
This section contains the proofs for spectrally-imbalanced GMM setting in Section \ref{sec:toyex}. 
\begin{lemma}[Expression for the POE in the imbalanced GMM]\label{lem:poe}
In the GMM setting of Section \ref{sec:toyex}, given an estimator $\blthetahat \in \R^p$, the per-class error and class gap satisfy
\begin{align*}
\text{POE}(\blthetahat \vert y) &\coloneqq \P \braces{\text{sign}(\blx^\top \blthetahat) \neq y | \blx \text{ in class } y } = 
Q\left(\frac{\langle \blthetahat, \blTS \rangle}{\Vert \blSigma_{y}^{1/2} \blthetahat \Vert_{2}}\right)\\
\text{ClassGap}(\blthetahat) &= \abs*{Q\left(\frac{\langle \blthetahat, \blTS \rangle}{\Vert \blSigma_{1}^{1/2} \blthetahat \Vert_{2}}\right) - Q\left(\frac{\langle \blthetahat, \blTS \rangle}{\Vert \blSigma_{-1}^{1/2} \blthetahat \Vert_{2}}\right)}\\
&\leq \frac{1}{\sqrt{2\pi}} \exp\parens*{- \frac{1}{2} \min_y  \frac{\langle \blthetatl, \bltheta^* \rangle^2}{\normt{\blSigma_y^{1/2} \blthetahat}^2}}\abs{\langle \blthetahat, \blTS \rangle} \abs*{\normt{\blSigma_{y}^{1/2} \blthetahat}^{-1} -\normt{\blSigma_{y}^{1/2} \blthetahat }^{-1}},
\end{align*}
where $Q(\cdot)$ is the Gaussian Q-function.
\end{lemma}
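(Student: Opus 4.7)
\textbf{Proof proposal for Lemma \ref{lem:poe}.} The plan is to first derive the closed form for the per-class POE by reducing it to a one-dimensional Gaussian tail probability, and then obtain the class-gap bound by applying the mean value theorem to the Gaussian $Q$-function.

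For the POE formula, I would fix $\blthetahat$ and condition on the class label. Since $\blx \mid y \sim \scrN(y\blTS, \blSigma_y)$, the scalar projection $\blx^\top \blthetahat$ is Gaussian with mean $y \ip{\blthetahat}{\blTS}$ and variance $\blthetahat^\top \blSigma_y \blthetahat = \normt{\blSigma_y^{1/2}\blthetahat}^2$. A misclassification event $\{\sign(\blx^\top \blthetahat) \ne y\}$ is equivalent to $\{y\,\blx^\top \blthetahat < 0\}$, and since $y^2=1$, $y\,\blx^\top \blthetahat$ is Gaussian with mean $\ip{\blthetahat}{\blTS}$ and the same variance. Standardizing and using the definition $Q(t) = \P(Z > t) = \P(Z < -t)$ for $Z \sim \scrN(0,1)$ yields
\[
\text{POE}(\blthetahat \mid y) = \P\parens*{Z < -\frac{\ip{\blthetahat}{\blTS}}{\normt{\blSigma_y^{1/2}\blthetahat}}} = Q\parens*{\frac{\ip{\blthetahat}{\blTS}}{\normt{\blSigma_y^{1/2}\blthetahat}}}.
\]
This is a short routine computation; the only subtlety worth flagging is the sign convention on $\ip{\blthetahat}{\blTS}$, which determines whether the classifier is better than chance, but it does not affect the final identity since $Q$ is defined on all of $\R$.

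For the class-gap bound, I would set $a_y := \ip{\blthetahat}{\blTS}/\normt{\blSigma_y^{1/2}\blthetahat}$ for $y \in \{-1,+1\}$ and apply the mean value theorem to $Q$: there exists some $c$ between $a_1$ and $a_{-1}$ with $Q(a_1) - Q(a_{-1}) = Q'(c)(a_1 - a_{-1})$, where $\abs{Q'(c)} = \frac{1}{\sqrt{2\pi}} e^{-c^2/2}$. Because $a_1$ and $a_{-1}$ share the sign of $\ip{\blthetahat}{\blTS}$, the intermediate value $c$ satisfies $c^2 \ge \min_y a_y^2$, so $\abs{Q'(c)} \le \frac{1}{\sqrt{2\pi}}\exp\parens*{-\tfrac{1}{2}\min_y a_y^2}$. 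Factoring $\ip{\blthetahat}{\blTS}$ out of the difference $a_1 - a_{-1}$ gives
\[
\abs{a_1 - a_{-1}} = \abs{\ip{\blthetahat}{\blTS}}\cdot \abs*{\normt{\blSigma_1^{1/2}\blthetahat}^{-1} - \normt{\blSigma_{-1}^{1/2}\blthetahat}^{-1}},
\]
and combining with the derivative bound yields the stated inequality.

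The main obstacle, such as it is, is merely bookkeeping: one must argue that $c^2 \ge \min_y a_y^2$, which requires $a_1$ and $a_{-1}$ to have the same sign (guaranteed here since both share the numerator $\ip{\blthetahat}{\blTS}$ and the denominators are positive). Everything else follows from the explicit form of $Q'$ and from writing the two $Q$-values as $Q(a_y)$ in a standardized form. No probabilistic machinery beyond the affine-Gaussian pushforward and no convex analysis is needed, so this lemma is fundamentally a warm-up computation that isolates the spectral quantity $\normt{\blSigma_y^{1/2}\blthetahat}$ as the single per-class descriptor governing generalization for a fixed estimator.
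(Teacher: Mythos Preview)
Your proposal is correct and follows essentially the same approach as the paper: both compute the POE by reducing $y\,\blx^\top\blthetahat$ to a one-dimensional Gaussian and standardizing, and both obtain the class-gap bound via the mean value theorem applied to $Q$ together with the explicit Gaussian density as $|Q'|$. Your treatment is in fact slightly more detailed than the paper's, which simply states that the bound ``follows from the mean value theorem'' without spelling out the same-sign observation needed for $c^2 \ge \min_y a_y^2$.
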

\begin{proof} Note we can write $\blx$ as $y \bltheta^* + \blSigma_y^{1/2} \blz$, for $\blz \sim \scrN(0, \blI)$. Then,
    \begin{align*}
        \P \braces{\text{sign}(\blx^\top \blthetahat) \neq y | \blx \text{ in class } y } &= \P \braces{y(\blx^\top \blthetahat) < 0 | \blx \text{ in class } y }\\
        &= \P \braces{y((y \bltheta^* + \blSigma_y^{1/2} \blz)^\top \blthetahat) < 0}\\
        &= \P \braces{\langle \blthetahat, \bltheta^* \rangle  + y\blthetahat^\top \blSigma_y^{1/2}\blz  < 0}\\
        &= \P \braces{\langle \blthetahat, \bltheta^* \rangle  + \norm{\blSigma_y^{1/2}\blthetahat}_2 z  < 0}\\
        &= \P \braces*{z < -\frac{\langle \blthetahat, \bltheta^* \rangle}{\norm{\blSigma_y^{1/2}\blthetahat}_2}}\\
         &= Q\left(\frac{\langle \blthetahat, \blTS \rangle}{\Vert \blSigma_{y}^{1/2} \blthetahat \Vert_{2}}\right),
    \end{align*}
    where, in the fourth line, we use the fact that $y\blthetahat^\top \blSigma_y^{1/2}\blz \overset{d}{=} \norm{\blSigma_y^{1/2}\blthetahat}_2 z,$ for a scalar $z \sim \scrN(0,1).$ The upper bound on the class gap follows from the mean value theorem, after noting that the derivative of the Q-function is the negative of the Gaussian pdf.
\end{proof}

Before proceeding with the full proof of our main technical result, we first provide a brief sketch of the proof technique and the primary differences from previous works which study the balanced GMM setting. 

\paragraph{Proof outline and remarks} Our proof generalizes the result of \cite{taheri2020asymptotic} to the case where each class has a different covariance and allows us to characterize the error in a per-class way. To make the connections to this analysis (and the key differences) clear, we make an attempt to follow the same notation when possible. Due to the covariate imbalance between classes, our analysis of the ERM objective function differs in a few important respects. Typically, application of the CGMT to anisotropic covariates requires a ``whitening'' step to obtain standard normal variables. The key step in our proof is to \textit{separately} whiten the data from each class, which ``decouples'' the objective function into a portion corresponding to each class with its own decision variables. These two portions are tied together via the introduction of a new constraint. 

We then invoke the CGMT once for each class (or more precisely, via a multivariate extension of \cite{dhifallah2021inherent}) to write the problem as an equivalent ``Auxiliary Optimization'' (AO) objective which has a similarly decoupled form. Each of these portions of the AO can be simplified separately using the same ``scalarization'' techniques as in \cite{taheri2020asymptotic}. The additional constraint we introduced during the whitening step then plays a role in the final step of the proof, yielding the second-order Moreau envelope function which depends on the decision variables corresponding to both classes. Finally, we note that, unlike \cite{taheri2020asymptotic}, we do not consider the effect of adversarial training; however, the same proof technique can be extended in a natural way to deal with this case. 

We now proceed to the full proof of Theorem \ref{thm:main_thm}.

\begin{proof}[Proof of Theorem \ref{thm:main_thm}]
We first analyze the ERM optimization through a series of simplifications. 
\begin{align*}
    \min_{\bltheta \in \R^p} \frac{1}{n} \sum_{i=1}^n \scrL(y_i \ip{\blx_i}{\bltheta}) + r \normt{\bltheta}^2 &= \min_{\bltheta \in \R^p} \frac{1}{n} \sum_{i=1}^n \scrL(y_i \ip{y_i\bltheta^* + \blSigma_{y_i}^{1/2}\blz_i}{\bltheta}) + r \normt{\bltheta}^2\\
    &\overset{d}{=} \min_{\bltheta \in \R^p} \frac{1}{n} \sum_{i=1}^n \scrL(\ip{\bltheta^* + \blSigma_{y_i}^{1/2}\blz_i}{\bltheta}) + r \normt{\bltheta}^2,
\end{align*}
where the last line follows from $y_i\blz_i \overset{d}{=} \blz_i$. Then, we re-index the data points as $\blz_{y,i}$ for $y \in \{-1,1\}$ and $i \in [n_y]$ to get
\begin{align*}
    &\min_{\bltheta \in \R^p} \frac{1}{n} \sum_{y \in\{-1,1\}} \sum_{i=1}^{n_y} \scrL(\ip{\bltheta^* + \blSigma_{y}^{1/2}\blz_{y,i}}{\bltheta}) + r \normt{\bltheta}^2\\
\end{align*}
Next, let $\blthetatl_y \coloneqq \blSigma_y^{1/2} \bltheta$ and $\blTStl_y \coloneqq \blSigma_y^{-1/2}\blTS$. We now use a change of variable to write the problem as an optimization over the two variables $\blthetatl_1$ and $\blthetatl_{-1}$ while enforcing that both correspond to the same $\bltheta$.
\begin{align*}
    &\min_{\substack{\blthetatl_y \in \R^p \\ \text{s.t.} \blSigma_1^{-1/2}\blthetatl_1 = \blSigma_{-1}^{-1/2}\blthetatl_{-1}}}  r \normt{\blSigma_1^{-1/2}\blthetatl_1}^2 + \frac{1}{n} \sum_{y \in\{-1,1\}} \sum_{i=1}^{n_y} \scrL(\ip{\blthetatl_y}{\blTStl_y} + \ip{\blz_{y,i}}{\blthetatl_y})\\
\end{align*}
By duality, we can now write this as a min-max problem.
\begin{align*}
    &=\min_{\blthetatl_y \in \R^p }  \max_{\blw \in \R^p} \quad r \normt{\blSigma_1^{-1/2}\blthetatl_1}^2 + \ip{\blw}{\blSigma_1^{-1/2}\blthetatl_1 - \blSigma_{-1}^{-1/2}\blthetatl_{-1}} + \frac{1}{n} \sum_{y} \sum_{i=1}^{n_y} \scrL(\ip{\blthetatl_y}{\blTStl_y} + \ip{\blz_{y,i}}{\blthetatl_y})\\
    &= 
    \min_{\blthetatl_y \in \R^p }  \max_{\blw \in \R^p} \quad r \normt{\blSigma_1^{-1/2}\blthetatl_1}^2 
    + \ip{\blw}{\blSigma_1^{-1/2}\blthetatl_1 - \blSigma_{-1}^{-1/2}\blthetatl_{-1}}\\  &\hspace{8em} + \frac{1}{n}\sum_{y} \brackets*{ \sum_{i=1}^{n_y} \scrL(\ip{\blthetatl_y}{\blTStl_y} + \ip{\blz_{y,i}}{\blTheta_y \blthetatl_y} +  \ip{\blz_{y,i}}{\blTheta_y^\perp \blthetatl_y})}
\end{align*}
where we define $\blTheta_y \coloneqq \frac{\blTStl_y \blthetatl^{*\top}_y}{\normt{\blTStl_y}^2}$ as the orthogonal projection matrix onto $\blTStl_y$. Now, introduce the constraints $v_{y,i} = \ip{\blthetatl_y}{\blTStl_y} + \ip{\blz_{y,i}}{\blTheta_y \blthetatl_y} +  \ip{\blz_{y,i}}{\blTheta_y^\perp \blthetatl_y}$, corresponding to dual variables $\blu_y$ to get:
\begin{equation}
    \begin{aligned}
        &\min_{\substack{\blthetatl_y \in \R^p \\ \blv_y \in \R^{n_y}}}  \max_{\substack{\blw \in \R^p \\ \blu_y \in \R^{n_y}}} \quad r \normt{\blSigma_1^{-1/2}\blthetatl_1}^2 + \ip{\blw}{\blSigma_1^{-1/2}\blthetatl_1 - \blSigma_{-1}^{-1/2}\blthetatl_{-1}}\\  &\hspace{4em} +  \frac{1}{n}\sum_{y \in \{-1,1\}} \brackets*{\ones^\top\scrL(\blv_y) + \ip{\blu_y}{\ones}\ip{\blTStl_y}{\blthetatl_y} + \ip{\blu_y}{\blZ_y \blTheta_y \blthetatl_y} + \ip{\blu_y}{\blZ_y \blTheta_y^\perp \blthetatl_y} - \ip{\blu_y}{\blv_y}}
    \end{aligned}
\end{equation}
Here, $\scrL(\blv_y) \in \R^{n_y}$ is a vector containing $\scrL(\blv_{y,i})$ and $\blZ_y \in \R^{n_y \times p}$ is a matrix with the $\blz_{y,i}$ as rows. We call this problem the Primary Optimization (PO). We note here that for each $y$, $\blZ_y \blTheta_y \perp \blZ_y \blTheta_y^\perp$. Furthermore, $\blZ_1 \perp \blZ_{-1}$. So, for each $y$, we can replace $\blZ_y \blTheta_y^\perp$ by $\hat{\blZ}_y \blTheta_y^\perp$ for an independent Gaussian matrix $\hat{\blZ}_y$. Conditioning on the $\blZ_y$, we can see that the objective function can be written in the form 
\[
\sum_{y \in \{-1,1\}} \ip{\blu_y}{\hat{\blZ}_y \blTheta_y^\perp \blthetatl_y}  + \psi((\blthetatl_y, \blv_y), (\blw, \blu_y))
\]
for a convex-concave function $\psi$. Here, we can apply Theorem 3 of \cite{dhifallah2021inherent}, which formalizes the repeated application of the CGMT in this setting and allows us to write a corresponding Auxiliary Optimization (AO) problem. It is important to note that using this theorem requires that the optimization be over compact sets. This can be relaxed as in \cite{thrampoulidis2018precise} by arguing that the optimal solution in the limit is bounded so we can equivalently consider the PO to be over a sufficiently large bounded set without changing its solution.

This theorem states that if the optimal solution of the AO converges in probability to some value $\phi$, then the optimal solution of the PO converges to the same value. Moreover, if we restrict the AO decision variables to lie outside of some set $\scrS$ and the resultant problem converges to some $\phi' > \phi$, we can conclude that the optimal minimizers/maximizers of the PO belong to $\scrS$ with probability tending to $1$. 

Applying the CGMT, we arrive at the following AO:

\begin{equation}
    \begin{aligned}
        &\min_{\substack{\blthetatl_y \in \R^p \\ \blv_y \in \R^{n_y}}}  \max_{\substack{\blw \in \R^p \\ \blu_y \in \R^{n_y}}} \quad r \normt{\blSigma_1^{-1/2}\blthetatl_1}^2 + \ip{\blw}{\blSigma_1^{-1/2}\blthetatl_1 - \blSigma_{-1}^{-1/2}\blthetatl_{-1}}\\
        &\hspace{4em} +  \frac{1}{n}\sum_{y \in \{-1,1\}} \left[\ones^\top\scrL(\blv_y) + \ip{\blu_y}{\ones}\ip{\blTStl_y}{\blthetatl_y} + \ip{\blu_y}{\blZ_y \blTheta_y \blthetatl_y} + \normt{\blTheta^\perp_y \blthetatl_y}\blg_y^\top \blu_y \right.\\
        &\hspace{8em}+ \left.\normt{\blu_y}\blh_y^\top \blTheta_y^\perp \blthetatl_y  - \ip{\blu_y}{\blv_y}\right],
    \end{aligned}
\end{equation}
where $\blg_y \sim \scrN(0, \blI_{n_y})$ and $\blh_y \sim \scrN(0, \blI_p)$ are independent random vectors. We will now perform a series of simplifications that show that this min-max problem converges in probability to the solution of the scalar problem provided in Theorem \ref{thm:main_thm}. 

First, observe that the terms inside the summation are identical to the terms which appear in the AO which is studied in \cite{taheri2020asymptotic} (ignoring the terms in their analysis which correspond to adversarial training). So, we can ``scalarize'' the AO problem by applying the same analysis separately over each $y$. Due to the similarity of the steps of this part of the proof, we omit some of the details and emphasize places where the simplification of the AO differs. 

To begin, we can decouple the optimization over $\blu_y$ into the optimization over its norm and its unit direction and explicitly solve the maximization over its direction. Hence, we are left only with the maximization over its norm, which we denote through the variables $\beta_y = \normt{\blu_y}/\sqrt{n}$. As in \cite{taheri2020asymptotic}, we introduce new variables $\blrhotl_y = \blthetatl_y$ (enforced via the dual variables $\bllambda_y$) to allow us to separate the terms inside the summation from the terms outside the summation:
\begin{equation}
    \begin{aligned}
        &\min_{\substack{\blrhotl_y, \blthetatl_y \in \R^p \\ \blv_y \in \R^{n_y}}}  \max_{\substack{\blw, \bllambda_y \in \R^p \\ \beta_y \in \R_+}} \quad r \normt{\blSigma_1^{-1/2}\blrhotl_1}^2 + \ip{\blw}{\blSigma_1^{-1/2}\blrhotl_1 - \blSigma_{-1}^{-1/2}\blrhotl_{-1}}\\
        &\hspace{4em} +  \sum_{y \in \{-1,1\}} \left[\frac{1}{n}\ones^\top\scrL(\blv_y) + \frac{\beta_y}{\sqrt{n}}\normt*{-\blv_y + \ip{\blTStl_y}{\blthetatl_y}\ones + \blZ_y\blTheta_y\blthetatl_y + \blg_y \normt{\blTheta_y^\perp \blthetatl_y}}\right.\\
        &\hspace{8em} \left. + \frac{\beta_y}{\sqrt{n}}\ip{\blh_y}{\blTheta^\perp_y \blthetatl_y} + \frac{1}{\sqrt{p}}\ip{\bllambda_y}{\blrhotl_y - \blthetatl_y}\right]
    \end{aligned}
\end{equation}

Next, we explicitly solve the minimization over $\blTheta^\perp_y \blthetatl_y$ and let $\alpha_y = \normt{\blTheta^\perp_y \blthetatl_y}$ to get 

\begin{equation}
    \begin{aligned}
        &\min_{\substack{\blrhotl_y, \blTheta_y\blthetatl_y \in \R^p \\ \blv_y \in \R^{n_y} \\ \alpha_y \in \R_+}}  \max_{\substack{\blw, \bllambda_y \in \R^p \\ \beta_y \in \R_+}} \quad r \normt{\blSigma_1^{-1/2}\blrhotl_1}^2 + \ip{\blw}{\blSigma_1^{-1/2}\blrhotl_1 - \blSigma_{-1}^{-1/2}\blrhotl_{-1}}\\
        &\hspace{4em} +  \sum_{y \in \{-1,1\}} \left[\frac{1}{n}\ones^\top\scrL(\blv_y) + \frac{\beta_y}{\sqrt{n}}\normt*{-\blv_y + \ip{\blTStl_y}{\blTheta_y\blthetatl_y}\ones + \blZ_y\blTheta_y\blthetatl_y + \blg_y \alpha_y}\right.\\
        &\hspace{8em} \left. -\alpha_y\normt*{-\frac{\blTheta_y^\perp \bllambda_y}{\sqrt{p}} + \frac{\beta_y\blTheta_y^\perp \blh_y}{\sqrt{n}}} + \frac{1}{\sqrt{p}}\ip{\bllambda_y}{\blTheta_y(\blrhotl_y - \blthetatl_y} + \frac{1}{\sqrt{p}}\ip{\bllambda_y}{\blTheta^\perp_y \blrhotl_y}\right].
    \end{aligned}
\end{equation}
Note that this step requires switching the $\min$ and $\max$ (which is allowed by the compactification argument discussed earlier).
The next step is to rewrite the $\normt{\cdot}$ as $\normt{\cdot}^2$ using the trick $x = \min_{\tau \geq 0} \frac{x^2}{2 \tau} + \frac{\tau}{2}$ . This results in new decision variables $\tau_y, \eta_y \geq 0$:

\begin{equation}
    \begin{aligned}
        &\min_{\substack{\blrhotl_y, \blTheta_y\blthetatl_y \in \R^p \\ \blv_y \in \R^{n_y} \\ \alpha_y, \tau_y \geq 0}}  \max_{\substack{\blw, \bllambda_y \in \R^p \\ \beta_y, \gamma_y \geq 0}} \quad r \normt{\blSigma_1^{-1/2}\blrhotl_1}^2 + \ip{\blw}{\blSigma_1^{-1/2}\blrhotl_1 - \blSigma_{-1}^{-1/2}\blrhotl_{-1}}\\
        &\hspace{4em} +  \sum_{y \in \{-1,1\}} \left[\frac{1}{n}\ones^\top\scrL(\blv_y) + \frac{\beta_y}{2\tau_y n}\normt*{-\blv_y + \ip{\blTStl_y}{\blTheta_y\blthetatl_y}\ones + \blZ_y\blTheta_y\blthetatl_y + \blg_y \alpha_y}^2\right.\\
        &\hspace{6em} + \frac{\beta_y \tau_y}{2}\left. -\frac{\alpha_y}{2\gamma_y p}\normt*{-\frac{\blTheta_y^\perp \bllambda_y}{\sqrt{p}} + \frac{\beta_y\blTheta_y^\perp \blh_y}{\sqrt{n}}}^2 -\frac{\alpha_y \gamma_y}{2} + \frac{1}{\sqrt{p}}\ip{\bllambda_y}{\blTheta_y(\blrhotl_y - \blthetatl_y} + \frac{1}{\sqrt{p}}\ip{\bllambda_y}{\blTheta^\perp_y \blrhotl_y}\right].
    \end{aligned}
\end{equation}

Then, we optimize over $\bllambda_y$. This step involves a few intermediate simplifications (completing the square and separately optimizing over $\blTheta \bllambda$ and $\blTheta^{\perp} \bllambda$ separately. These steps are identical to those in \cite{taheri2020asymptotic}, and hence the details are omitted here. This step results in the additional constraint $\mu_y = \frac{\ip{\blTStl_y}{\blrhotl_y}}{\normt{\blTStl_y}^2}$, which enforces $\blTheta_y \blthetatl_y = \blTheta_y \blrhotl_y$. Again using strong duality, we write this as a maximization over the dual variables $\eta_y$ and ultimately obtain

\begin{equation}
    \begin{aligned}
        &\min_{\substack{\blrhotl_y\in \R^p \\ \blv_y \in \R^{n_y} \\ \alpha_y, \tau_y \in \R_+ \\ \mu_y \in \R}}  \max_{\substack{\blw \in \R^p \\ \beta_y, \gamma_y \in \R_+ \\ \eta_y \in \R}} \quad r \normt{\blSigma_1^{-1/2}\blrhotl_1}^2 + \ip{\blw}{\blSigma_1^{-1/2}\blrhotl_1 - \blSigma_{-1}^{-1/2}\blrhotl_{-1}}\\
        &\hspace{4em} +  \sum_{y \in \{-1,1\}} \left[\frac{1}{n}\ones^\top\scrL(\blv_y) + \frac{\beta_y}{2 \tau_y n}\normt*{-\blv_y + \ip{\blTStl_y}{\blTheta_y\blthetatl_y}\ones + \blZ_y\blTheta_y\blthetatl_y + \blg_y \alpha_y}^2\right.\\
        &\hspace{8em} \left. + \frac{\beta_y \tau_y}{2} - \frac{\alpha_y \gamma_y}{2} + \frac{\gamma_y}{2p \alpha_y} \normt*{\blrhotl_y \sqrt{p} + \frac{\alpha_y \beta_y}{\gamma_y \sqrt{\delta}} \blh_y}^2 - \frac{\mu_y^2 \gamma_y \normt{\blthetatl_y}^2}{2\alpha_y} - \frac{\alpha_y\beta_y^2 \norm{\blTheta_y^{\perp} \blh_y}^2}{2 n \gamma_y}\right.\\
        &\hspace{8em} \left. - \frac{\alpha_y \beta_y^2}{2 p \gamma_y \delta}\normt{\blTheta_y\blh_y}^2 - \frac{2 \beta_y}{\sqrt{n}}\ip{\blh}{\blTheta_y \blrhotl_y} + \eta_y\parens*{\mu_y - \frac{\ip{\blTStl_y}{\blrhotl_y}}{\normt{\blTStl_y}^2}} \right]
    \end{aligned}
\end{equation}

Note here, as in \cite{taheri2020asymptotic}, that the terms $\frac{\alpha_y \beta_y^2}{2 p \gamma_y \delta}\normt{\blTheta_y\blh_y}^2$ and $\frac{2 \beta_y}{\sqrt{n}}\ip{\blh}{\blTheta_y \blrhotl_y}$ tend to $0$ in the limit.

Next, optimization over $\blv_1$ and $\blv_{-1}$ is done by noting that
\begin{align*}
        &\min_{\blv_1, \blv_{-1}} \sum_{y \in \{-1,1\}} \left[\frac{1}{n}\ones^\top\scrL(\blv_y) + \frac{\beta_y}{2 \tau_y n}\normt*{-\blv_y + \ip{\blTStl_y}{\blTheta_y\blthetatl_y}\ones + \blZ_y\blTheta_y\blthetatl_y + \blg_y \alpha_y}^2\right]\\
        &= \sum_{y \in \{-1,1\}} \pi_y \min_{\blv_y \in \R^{n_y}}  \left[\frac{1}{n_y}\ones^\top\scrL(\blv_y) + \frac{\beta_y}{2 \tau_y n_y}\normt*{-\blv_y + \ip{\blTStl_y}{\blTheta_y\blthetatl_y}\ones + \blZ_y\blTheta_y\blthetatl_y + \blg_y \alpha_y}^2\right]\\
        &= \sum_{y \in \{-1,1\}} \pi_y \frac{1}{n_y} \sum_{i=1}^{n_y} \scrM_{\scrL}^{(1)}\parens*{\ip{\blTStl_y}{\blTheta_y\blthetatl_y} + \ip{\blz_{y,i}}{\blTheta_y \blthetatl_y} + \alpha_y \blg_{y,i} ; \frac{\tau_y}{\beta_y}}\\
        &= \sum_{y \in \{-1,1\}} \pi_y \frac{1}{n_y} \sum_{i=1}^{n_y} \scrM_{\scrL}^{(1)}\parens*{\mu_y\normt{\blthetatl_y}^2 + \ip{\blz_{y,i}}{\blTheta_y \blthetatl_y} + \alpha_y \blg_{y,i} ; \frac{\tau_y}{\beta_y}}
\end{align*}
Finally, we optimize over $\blrhotl$. This step differs from that in \cite{taheri2020asymptotic} since it accounts for the extra constraint that $\blSigma_1^{-1/2}\blrhotl_1 = \blSigma_{-1}^{-1/2}\blrhotl_{-1}$. 

 \begin{align*}
     &\min_{\substack{\blrhotl_1, \blrhotl_{-1} \\ \text{s.t. } \blSigma_1^{-1/2} \blrhotl_1 = \blSigma_{-1}^{-1/2} \blrhotl_{-1}}}  r \normt{\blSigma_1^{-1/2}\blrhotl_1}^2 + \sum_{y} \brackets*{\frac{\gamma_y}{2p \alpha_y} \normt*{\blrhotl_y \sqrt{p} + \frac{\alpha_y \beta_y}{\gamma_y \sqrt{\delta}} \blh_y}^2 - \frac{\eta_y \ip{\blTStl_y}{\blrhotl_y}}{\normt{\blTStl_y}^2}}\\
     &= \min_{\substack{\blrhotl_1, \blrhotl_{-1} \\ \text{s.t. } \blSigma_1^{-1/2} \blrhotl_1 = \blSigma_{-1}^{-1/2} \blrhotl_{-1}}}  r \normt{\blSigma_1^{-1/2}\blrhotl_1}^2 + \sum_{y} \brackets*{\frac{\gamma_y}{2p \alpha_y} \normt*{\blrhotl_y \sqrt{p} + \frac{\alpha_y \beta_y}{\gamma_y \sqrt{\delta}} \blh_y - \frac{\eta_y \alpha_y \sqrt{p}}{\gamma_y \normt{\blTStl_y}^2}\blTStl_y}^2 - \frac{\eta_y^2 \alpha_y}{2 \gamma_y \normt{\blTStl_y}^2}}\\
 \end{align*}

Here, we have rewritten the term involving $\blw$ as an explicit constraint on the $\blrhotl_y$, using strong duality. Now, note that this is equivalent to optimizing over a single variable $\blrho = \blSigma_1^{-1/2} \blrhotl_1 = \blSigma_{-1}^{-1/2} \blrhotl_{-1}$, so we can write this as

\begin{align*}
    & \min_{\blrho}  r \normt{\blrho}^2 + \sum_{y} \brackets*{\frac{\gamma_y}{2p \alpha_y} \normt*{\blSigma_y^{1/2} \parens*{\blrho \sqrt{p} + \frac{\alpha_y \beta_y}{\gamma_y \sqrt{\delta}} \blSigma_y^{-1/2} \blh_y - \frac{\eta_y \alpha_y \sqrt{p}}{\gamma_y \normt{\blTStl_y}^2}\blSigma_y^{-1} \blTS}}^2 - \frac{\eta_y^2 \alpha_y}{2 \gamma_y \normt{\blTStl_y}^2}}\\
    &= \frac{r}{p} \sum_{i=1}^p \scrM^{(2)}_{\ell_2^2}\parens*{\frac{\alpha_y \beta_y}{\gamma_y \sqrt{\delta \lambda_{y,i}}} \blh_{y,i} + \frac{\eta_y \alpha_y \sqrt{p}}{\gamma_y \normt{\blTStl_y}^2 \lambda_{y,i}} \blTS_i ; \frac{\alpha_y r}{\gamma_y \lambda_{y,i}}} - \sum_{y} \frac{\eta_y^2 \alpha_y}{2 \gamma_y \normt{\blTStl_y}^2}
\end{align*}

We have now expressed the AO entirely in terms of a scalar optimization problem. We then can consider the asymptotics of each term in the objective function of the scalarized AO, using the fact that $\normt{\blTStl_y} \to \zeta_y$ and the Moreau envelope terms converge in probability to their expectations as $n, p \to \infty$. This shows pointwise convergence of the objective function. We can use the fact that pointwise convergence of continuous, convex functions over compact sets is uniform (e.g., Cor II.1 in \cite{andersen1982cox}) to conclude that the optimal value of the AO converges to the optimal value of the scalar min-max problem. Hence, by the CGMT, the optimal value of the original ERM problem (the PO) also converges in probability to this value.

We now need to extend this to a statement about the \textit{test error}, as given in Lemma \ref{lem:poe}. By Lemma 7 in \cite{taheri2020asymptotic}, the desired result follows if we can show that the solution $\blthetatl_y$ of the POE satisfies
\[
\frac{\ip{\blthetatl_y}{\blTStl_y}}{\normt{\blTStl_y}^2} \overset{P}{\to} \mu_y^*, \quad \normt{\blTheta_y^{\perp} \blthetatl_y} \overset{P}{\to}
 \alpha_y^*,
 \]
 where $\mu_y^*$ and $\alpha_y^*$ are the optimal decision variables from \ref{thm:main_thm}. The proof of these two statements are identical, so we only show the latter. First fix $\epsilon > 0$ and define the sets $\scrS_y = \braces*{\blthetatl_y \colon \abs*{\normt{\blTheta_y^{\perp} \blthetatl_y} -
 \alpha_y^*} < \epsilon}$. Now we consider the same AO but with the additional constraint $\blthetatl_y \in \scrS_y^C$. By assumption that the scalar min-max in Theorem \ref{thm:main_thm} has a unique solution, this can only strictly increase the optimal value with probability approaching 1. Hence, we can apply the third part of Theorem 3 in \cite{dhifallah2021inherent} to conclude that $\P \braces{\blthetatl_y \notin \scrS_y} \to 0$ and the desired convergence in probability to $\alpha_y^*$ holds. Substituting these values into the expression for the POE in Lemma 7 of \cite{taheri2020asymptotic}, we can conclude that the per-class POE converges to the desired quantity. 

\end{proof}

\subsection{Additional details for the numerical simulations}\label{app:sim_details}
In this section, we provide additional details about the results displayed in Figure \ref{fig:theory}. 

In each of the three spectral imbalance settings, we apply Theorem \ref{thm:main_thm} with $\pi_y = 0.5$, the overparameterization ratio $\delta = 2$, regularization parameter $r=0.5$ and the loss as the squared hinge loss, $\scrL(t) = \max(0, 1-t)^2$. The scalar min-max problem is solved using gradient descent/ascent with learning rate 0.01. The choice of $\Pi$ used for each of the three settings is given below:

\begin{enumerate}
    \item \textbf{Impact of eigenvalue scaling (Setting (A)):} 
    \[
    (\sqrt{p} \bltheta^*_i, \lambda_{i}^{(1)}, \lambda_{i}^{(-1)}) \simiid 
    \begin{cases} 
      (1, 2s, 2) & \text{with prob. } 0.5 \\
      (1, 0.5s, 0.5) & \text{with prob. } 0.5   \end{cases}
    \]
    \item \textbf{Impact of eigenvalue decay (Setting (B)):}
    \[
    (\sqrt{p} \bltheta^*_i, \lambda_{i}^{(1)}, \lambda_{i}^{(-1)}) \simiid 
    \begin{cases} 
      (1, 2, 2) & \text{with prob. } 0.5p \\
      (1, 2, 0.5) & \text{with prob. } 0.5p \\
      (1, 0.5, 2) & \text{with prob. } 0.5(1-p) \\
      (1, 0.5, 0.5) & \text{with prob. } 0.5(1-p)   \end{cases}
    \]
    \item \textbf{Impact of alignment with target signal (Setting (C)):}
    \[
    (\sqrt{p} \bltheta^*_i, \lambda_{i}^{(1)}, \lambda_{i}^{(-1)}) \simiid 
    \begin{cases} 
      (1, 2, 0.5) & \text{with prob. } 0.5 \\
      (a, 0.5, 2) & \text{with prob. } 0.5   \end{cases}
    \]
\end{enumerate}
In each case, the numerical simulations are averaged over $50$ independent draws of $n=1000$ training examples (so $p=500$) from the GMM. For each trial, we solve the ERM objective function using gradient descent. 

\subsection{The interplay of sample imbalance and spectral imbalance}\label{app:sampleimbalance}

In Figure \ref{fig:theory}, we aimed to isolate the effect of spectral imbalance, so we assume that the training data is \textit{sample-balanced}, i.e., $\pi_1 = \pi_{-1} = \frac{1}{2}$ and an equal proportion of the training data comes from each class. The asymptotic predictions in Theorem \ref{thm:main_thm}, however, apply for general $\pi_y$, allowing for a careful study of the interplay between these aspects of the model setting.

In Figure \ref{fig:appendix_dataimb}, we provide the exact asymptotic predictions for this interplay in the three spectrally-imbalanced settings we introduce in Section \ref{sec:theory-insights}. These results reveal a nuanced relationship between these factors: for a fixed amount of spectral imbalance, data imbalance can either reduce or exacerbate the class gap, depending on the ``direction'' of the imbalance. Interestingly, we find that this relationship depends crucially on the type of spectral imbalance in the dataset. For example, in setting (A), sample imbalance seems to have little asymptotic effect on the class gap for a fixed amount of spectral imbalance. By contrast, in settings (B) and (C), the sample imbalance and spectral imbalance both have a pronounced role in determining the class gap.

\begin{figure}[!t]
    \begin{center}
    % \centerline{{\includegraphics[width=0.44\columnwidth]{imgs/experiments/motivation_fig_v12_nogrid.png} }}%
    \includegraphics[width=\textwidth]{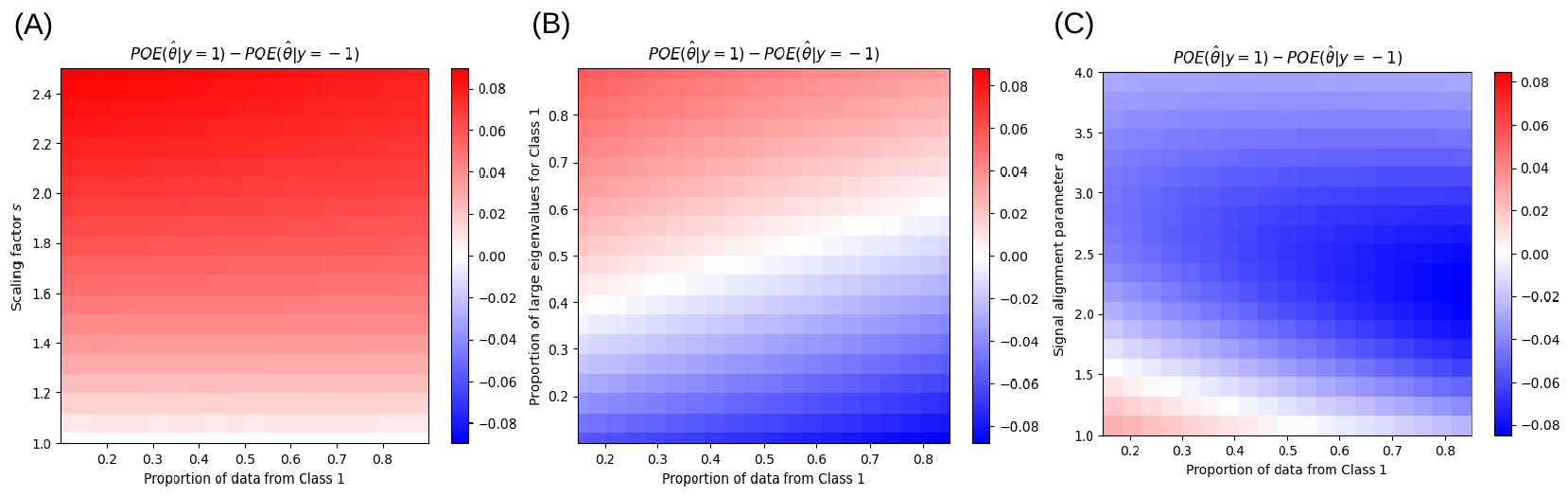}
    \caption{\textit{Interplay of sample and spectral imbalance:} Heatmap of the class gap across different amounts of sample imbalance and spectral imbalance. Settings (A), (B), and (C) correspond to the same three types of spectral imbalance considered in Section \ref{sec:theory-insights} and the simulation details provided in Appendix \ref{app:sim_details}.
    \label{fig:appendix_dataimb}}
    \end{center}
    %\vspace{-8mm}
\end{figure}

\section{Additional Experimental Details}\label{app:exp}

In Section~\ref{app:exp_visual}, we first provide additional visualizations to supplement the visualizations presented in the main text. 
In Section~\ref{app:exp_pcc_score}, we record the Pearson correlation coefficient score and other statistical metrics for each encoder. 
In Section~\ref{app:network_details}, we record the performance and details of the pre-trained models studied.
In Section~\ref{app:imagenetc_details}, we record the details of the ImageNet-C dataset created and used in Section~\ref{exp:data_augmentation} and also expand on the results in Section~\ref{exp:data_augmentation}.

\subsection{Additional visualizations}
\label{app:exp_visual}

In Figure~\ref{fig:appendix_direct}, we visualize the eigenspectrum in VIT-B as an additional example to complement Figure~\ref{fig:motivation}(A), which visualized the eigenspectrum for ResNet-50.
Specifically, the left panel plots the eigenspectrum on regular scale (as in Figure~\ref{fig:motivation}(A)) for VIT-B; the right panel plots the eigenspectrum on log-log scale to reveal the power law behavior, which was also observed and analyzed for ResNet-50 in Section~\ref{sec:experiments}. 

\begin{figure*}[!ht]
    \centering
    \includegraphics[width=0.8\textwidth]{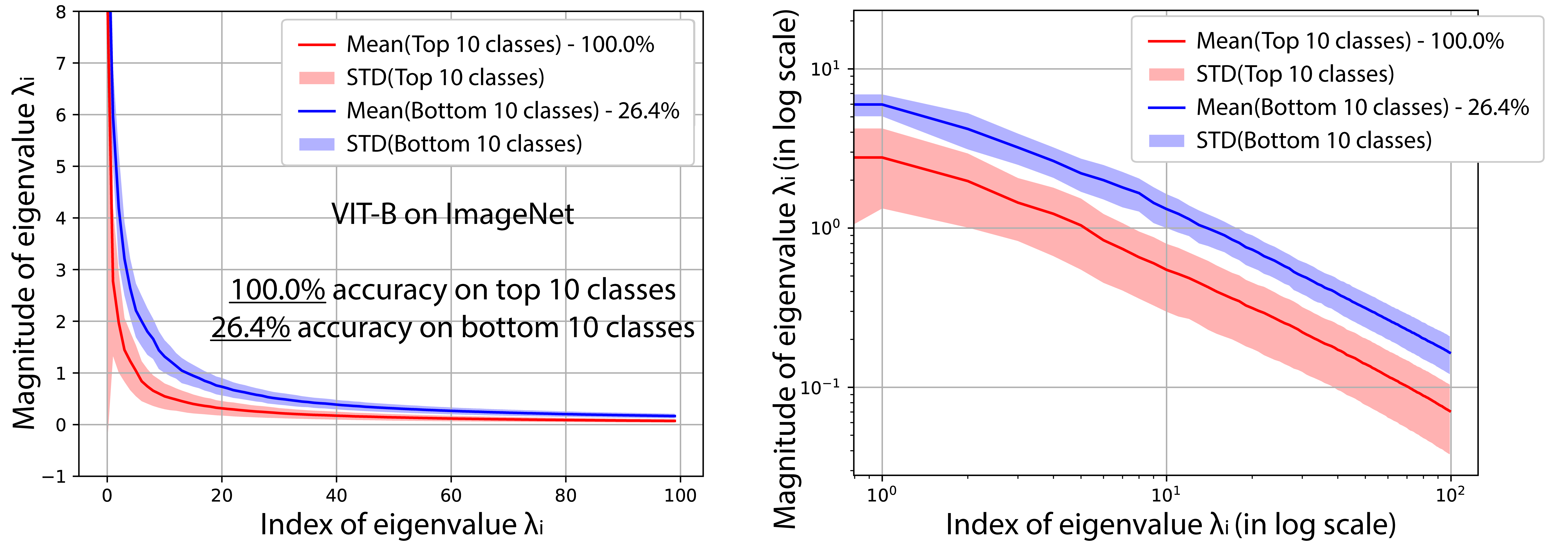}
    \vspace{-3mm}
    \caption{{\em Eigenspectrum visualization plots in normal scale and log scale for VIT-B.}}
    \label{fig:appendix_direct}
\end{figure*}

Additionally, in Figure~\ref{fig:appendix_direct_more}, we show the direct visualization of eigenspectrum in other encoders. The distributional differences across classes are consistent and strong in all observed representation spaces.

\begin{figure*}[!ht]
    \centering
    \includegraphics[width=\textwidth]{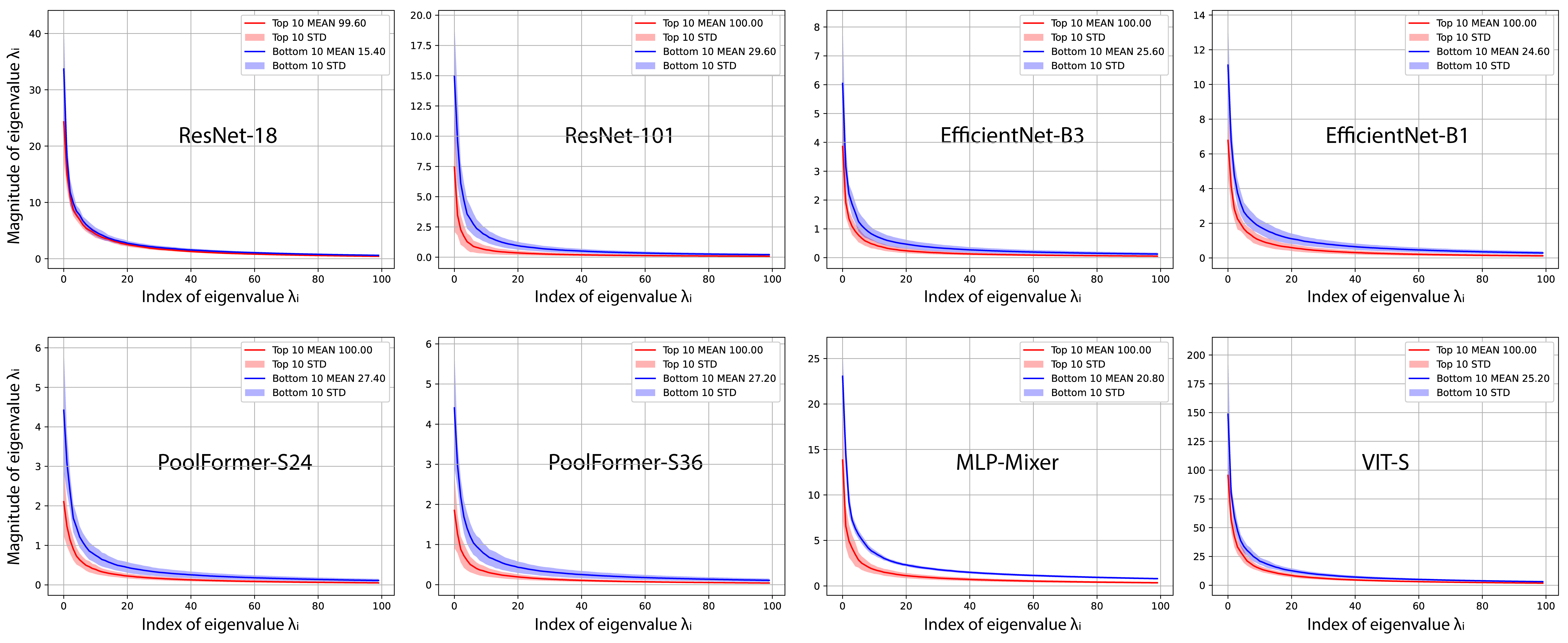}
    \vspace{-3mm}
    \caption{{\em Eigenspectrum visualization plots for other encoders.}}
    \label{fig:appendix_direct_more}
\end{figure*}

In Figure~\ref{fig:appendix_dist_compare}, we show that the distribution of eigenvalues across encoders are different for additional eigenvalue positions. The same figure as in Figure~\ref{fig:motivation}(B) (which was plotted for $\lambda_5$) is plotted for $\lambda_{50}$ and $\lambda_{100}$.
We observed similar trends for all other eigenvalue indexes as well. 

\begin{figure*}[!ht]
    \centering
    \includegraphics[width=0.8\textwidth]{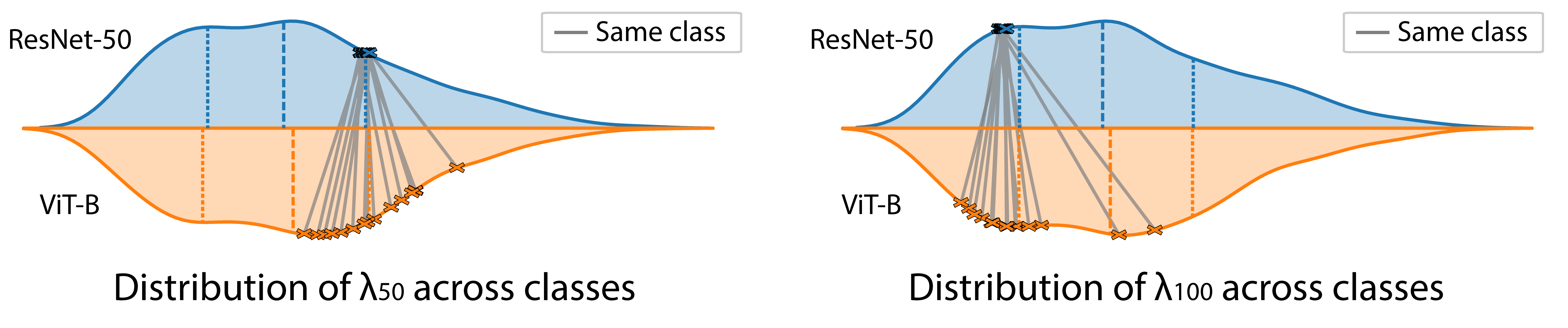}
    \vspace{-3mm}
    \caption{{\em Comparing the eigenvalue distributions across encoders for $\lambda_{50}$ and $\lambda_{100}$ (for all classes $C=1000$).}}
    \label{fig:appendix_dist_compare}
\end{figure*}

Finally, in Figure~\ref{fig:appendix_pcc}, we show the Pearson correlation coefficient plots corresponding to Figure~\ref{fig:correlation}(B) (which corresponded to ResNet-50) for all the other models.

\begin{figure*}[!ht]
    \centering
    \includegraphics[width=\textwidth]{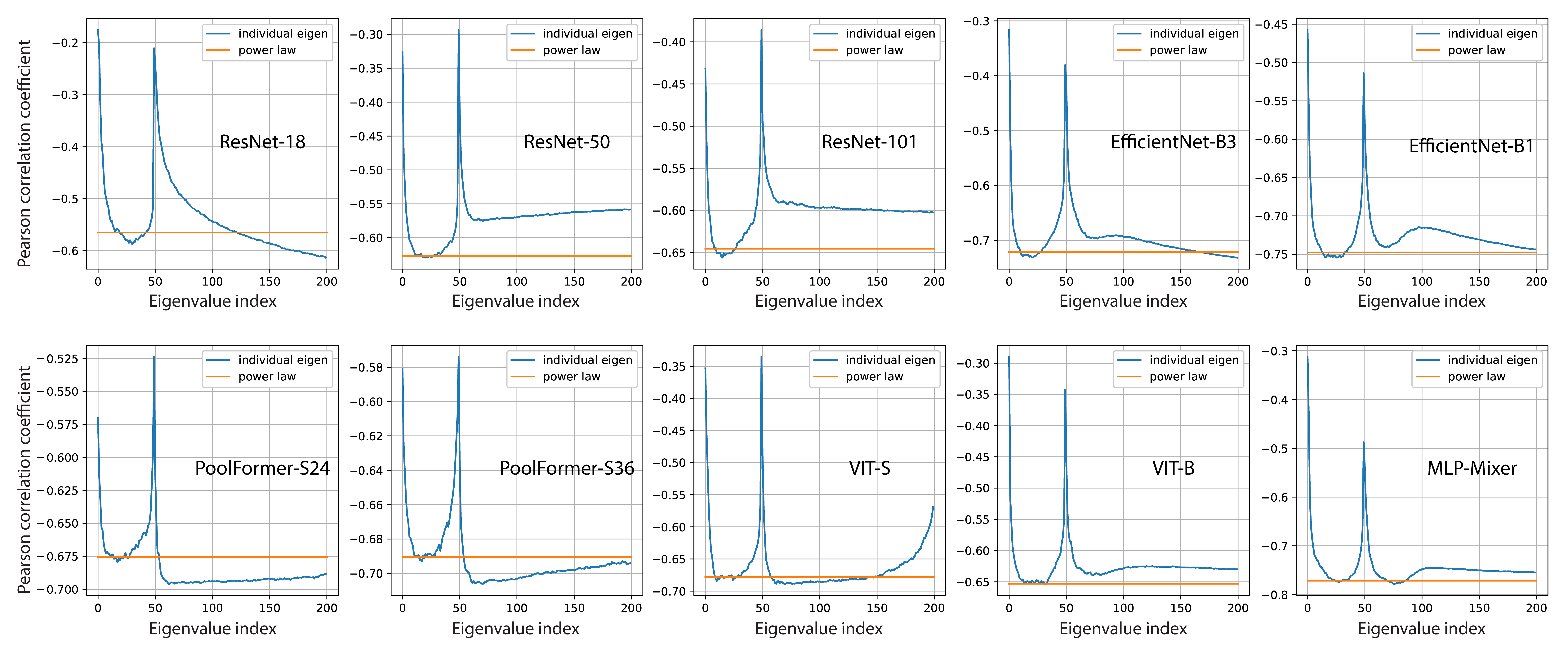}
    \vspace{-3mm}
    \caption{{\em Correlation plots for other encoders.}}
    \label{fig:appendix_pcc}
\end{figure*}

\subsection{Correlation analysis of all the encoders}
\label{app:exp_pcc_score}

First, we systematically record the average accuracy of the bottom 10, bottom 100, top 100, and top 10 classes of each encoder in Table~\ref{table:bias_details} to highlight the extent of the class bias.

\begin{table}[h!]
\caption{{The prevalence of class bias across encoders.}
}
\label{table:bias_details}
\begin{center}
\begin{small}
\begin{sc}
\begin{tabular}{lcccc}
\toprule
Model & Bottom 10 & Bottom 100  & Top 100 & Top 10 \\
\midrule
Densenet & 21.40\% & 40.76\% & 96.44\% & 99.80\% \\
EfficientNet-B1 & 24.60\% & 44.00\% & 97.64\% & 100.00\% \\
EfficientNet-B3 & 25.60\% & 44.64\% & 98.00\% & 100.00\% \\
Resnet-18 & 15.40\% & 34.06\% & 94.34\% & 99.60\% \\
Resnet-50 & 28.80\% & 46.94\% & 98.40\% & 100.00\% \\
Resnet-101 & 29.60\% & 49.44\% & 98.64\% & 100.00\% \\
VIT-S & 25.20\% & 41.22\% & 96.02\% & 100.00\% \\
VIT-B & 26.40\% & 48.86\% & 98.54\% & 100.00\% \\
MLPMixer & 20.80\% & 38.52\% & 96.08\% & 100.00\% \\
Poolformer-S24 & 27.40\% & 48.10\% & 98.60\% & 100.00\% \\
Poolformer-S36 & 27.20\% & 49.04\% & 98.80\% & 100.00\% \\
\midrule
AVG & 24.76\% & 44.14\%  &  97.41\% & 99.95\% \\
\bottomrule
\end{tabular}
\end{sc}
\end{small}
\end{center}
\vskip -0.1in
\end{table}

Second, we record the Pearson correlation coefficient (PCC) between per-class accuracy and class-dependent eigenspectrum properties for each encoder in Table~\ref{table:pcc_details}.
Aside from the individual eigenvalue scaling and power law offset, we also record the power law slope and two effective ranks that measure the \emph{rate} of eigenvalue decay in different ways. The definition of the studied two effective ranks are as below:

%densenet121 tensor(0.2140) tensor(0.4076) tensor(0.9980) tensor(0.9644)
%efficientnetb1 tensor(0.2460) tensor(0.4400) tensor(1.) tensor(0.9764)
%efficientnetb3 tensor(0.2560) tensor(0.4464) tensor(1.) tensor(0.9800)
%resnet18 tensor(0.1540) tensor(0.3406) tensor(0.9960) tensor(0.9434)
%resnet50 tensor(0.2640) tensor(0.4694) tensor(1.) tensor(0.9840)
%resnet101 tensor(0.2960) tensor(0.4944) tensor(1.) tensor(0.9864)
%vits tensor(0.2520) tensor(0.4122) tensor(1.) tensor(0.9602)
%vitb tensor(0.2880) tensor(0.4886) tensor(1.) tensor(0.9854)
%mlpmixer tensor(0.2080) tensor(0.3852) tensor(1.) tensor(0.9608)
%poolformers24 tensor(0.2740) tensor(0.4810) tensor(1.) tensor(0.9860)
%poolformers36 tensor(0.2720) tensor(0.4904) tensor(1.) tensor(0.9880)

\begin{definition}[\textbf{Effective Rank}]\label{eff_ranks}
    For any covariance matrix (spectrum) $\blSigma$, we define its two effective ranks:
    \begin{equation}
    \rho_{k}({\blSigma})=\frac{\sum_{i>k} \lambda_{i}}{\lambda_{k+1}} ,~~R_k({\blSigma}):=\frac{( \sum_{i>k}\lambda_i)^2}{\sum_{i>k}\lambda_i^2}.
    \end{equation}
\end{definition}
This definition was proposed in~\cite{bartlett2020benign} and has been shown to constitute a ``sufficient" statistic to explain average generalization error across regression and classification tasks~\cite{bartlett2020benign,tsigler2020benign,muthukumar2020class,wang2021binary}.
We examine the relationship between effective ranks for $k = 0$ and defer a more detailed study for other values of $k$ to future work.
% This is an effective measure of decay rates in many ridge regression literatures \cite{lin2022good}. 

\begin{table}[h!]
\caption{{Examining the correlation between spectral properties and per-class accuracy on different pre-trained encoders.}
}
\label{table:pcc_details}
\begin{center}
\begin{small}
\begin{sc}
\begin{tabular}{lccccc}
\toprule
Model & Max Corr. & PL Corr $a$. & PL Corr $b$. & Effective Rank 1 & Effective Rank 2 \\
\midrule
Densenet & -0.6677 & -0.6636  & -0.0598 & -0.5107 & -0.5384 \\
EfficientNet-B1 & -0.7599 & -0.7476  & -0.0107 & -0.6283 & -0.6131 \\
EfficientNet-B3 & -0.7505 & -0.7210  & 0.0007 & -0.6209 & -0.5816 \\
Resnet-18    & -0.6229 & -0.5649 & 0.0968 & -0.4434 & -0.4722 \\
Resnet-50    & -0.6293 & -0.6270  & 0.0447 & -0.4943 & -0.4952 \\
Resnet-101   & -0.6560 & -0.6454  & 0.0170 & -0.4869 & -0.4896 \\
VIT-S      & -0.6891 & -0.6783  & -0.0571 & -0.5195 & -0.5399 \\
VIT-B      & -0.6536 & -0.6531  & -0.1493 & -0.4824 & -0.4934 \\
MLPMixer   & -0.7785 & -0.7715  & 0.0460 & -0.7074 & -0.6707 \\
Poolformer-S24  & -0.6961 & -0.6755  & 0.0708 & -0.5544 & -0.5895 \\
Poolformer-S36  & -0.7062 & -0.6904  & 0.0006 & -0.5652 & -0.5638 \\
\midrule
AVG & -0.6918 & -0.6762  & -0.0002 & -0.5466 & -0.5497 \\
\bottomrule
\end{tabular}
\end{sc}
\end{small}
\end{center}
\vskip -0.1in
\end{table}
Interestingly, we observe across models that the spectral metrics that measure \emph{relative rate} of eigenvalue decay --- such as the power law decay parameter $b$ and the effective ranks --- do not strongly correlate with class-wise accuracy, while the spectral metrics that are more absolute --- i.e. the individual eigenvalues and the power law parameter $a$ --- do strongly correlate.

\subsection{Correlation analysis on smaller datasets}
\label{app:smaller_datasets}
Aside from ImageNet, we also perform the same correlation analysis on smaller-scale datasets like CIFAR-10 and FASHION-MNIST to demonstrate the versatility of the proposed method. As shown in Table~\ref{table:rebuttal-small}, the negative correlation is stronger in smaller-scale datasets with fewer classes.

%Table 1: Results in other datasets:
%| Dataset | ResNet18 | ResNet34 | ResNet50 | DenseNet121 | efficientnetb1 | efficientnetb3 | Vits | Vitb | Mlpmixer | Poolformers24 | Poolformers36 |
%| --- | --- | --- | --- | --- | --- | --- | --- | --- | --- | --- | --- | 
%| Cifar-10 | 83.11 | 83.34 | 88.59 | 87.50 | 81.04 | 89.06 | 72.93 | 80.49 | 72.28 | 84.77 | 81.73 |
%| Fashion-MNIST | 76.34 | 66.77 | 86.05 | 80.45 | 73.30 | 55.91 | 79.22 | 48.37 | 91.27 | 65.22 | 70.57 |
%| ImageNet | 56.49 | 57.72 | 62.70 | 66.36 | 74.76 | 72.10 | 67.83 | 65.31 | 77.15 | 67.55 | 69.04 |

% Dataset & ResNet-18 & ResNet-34 & ResNet-50 & DenseNet & EfficientNet-B1 & EfficientNet-B3 & VIT-S & VIT-B & MLPMixer & PoolFormer-S24 & PoolFormer-S36 \\
% \midrule
% Cifar-10 & 83.11 & 83.34 & 88.59 & 87.50 & 81.04 & 89.06 & 72.93 & 80.49 & 72.28 & 84.77 & 81.73 \\
%Fashion-MNIST & 76.34 & 66.77 & 86.05 & 80.45 & 73.30 & 55.91 & 79.22 & 48.37 & 91.27 & 65.22 & 70.57 \\
% ImageNet & 56.49 & 57.72 & 62.70 & 66.36 & 74.76 & 72.10 & 67.83 & 65.31 & 77.15 & 67.55 & 69.04 \\

\begin{table}[h!]
\caption{{Examining the correlation between spectral properties and per-class accuracy on smaller datasets.}
}
\label{table:rebuttal-small}
\begin{center}
\begin{small}
\begin{sc}
\begin{tabular}{lccc}
\toprule
Model & CIFAR-10 & Fashion-MNIST & ImageNet \\
\midrule
Densenet & -0.8750 & -0.8045 & -0.6636 \\
EfficientNet-B1 & -0.8104 & -0.7330 & -0.7476 \\
EfficientNet-B3 & -0.8906 & -0.5591 & -0.7210 \\
Resnet-18    & -0.8311 & -0.7634 & -0.5649 \\
Resnet-34    & -0.8334 & -0.6677 & -0.5772 \\
Resnet-50   & -0.8859 & -0.8605 & -0.6270 \\
VIT-S      & -0.7293 & -0.7922 & -0.6783 \\
VIT-B      & -0.8049 & -0.4837 & -0.6531 \\
MLPMixer   & -0.7228 & -0.9127 & -0.7715 \\
Poolformer-S24  & -0.8477 & -0.6522 & -0.6755 \\
Poolformer-S36  & -0.8173 & -0.7057 & -0.6904 \\
\midrule
AVG & -0.8226 & -0.7213  & -0.6700 \\
\bottomrule
\end{tabular}
\end{sc}
\end{small}
\end{center}
\vskip -0.1in
\end{table}

% We note that for the power law slope, previous work \cite{ghosh2022investigating} has identified that its relationship with classification performance is not monotonic.
% It would be exciting for future works to investigate how would they correlate. 

% densenet121 tensor(-0.6677) tensor(-0.6636) tensor(-0.0598) tensor(-0.5107) tensor(-0.5384)
% efficientnetb1 tensor(-0.7599) tensor(-0.7476) tensor(-0.0107) tensor(-0.6283) tensor(-0.6131)
% efficientnetb3 tensor(-0.7505) tensor(-0.7210) tensor(0.0007) tensor(-0.6209) tensor(-0.5816)
% resnet18 tensor(-0.6229) tensor(-0.5649) tensor(0.0968) tensor(-0.4434) tensor(-0.4722)
% resnet50 tensor(-0.6293) tensor(-0.6270) tensor(0.0447) tensor(-0.4943) tensor(-0.4952)
% resnet101 tensor(-0.6560) tensor(-0.6454) tensor(0.0170) tensor(-0.4869) tensor(-0.4896)
% vits tensor(-0.6891) tensor(-0.6783) tensor(-0.0571) tensor(-0.5195) tensor(-0.5399)
% vitb tensor(-0.6536) tensor(-0.6531) tensor(-0.1493) tensor(-0.4824) tensor(-0.4934)
% mlpmixer tensor(-0.7785) tensor(-0.7715) tensor(0.0460) tensor(-0.7074) tensor(-0.6707)
% poolformers24 tensor(-0.6961) tensor(-0.6755) tensor(0.0708) tensor(-0.5544) tensor(-0.5895)
% poolformers36 tensor(-0.7062) tensor(-0.6904) tensor(0.0006) tensor(-0.5652) tensor(-0.5638)

\subsection{Details of pretrained networks}
\label{app:network_details}
In this section we record the details of the pre-trained networks that we used and evaluated to allow for reproducibility of our results.
\begin{itemize} % [leftmargin=*,noitemsep]
    \item \textbf{Residual Networks.} ResNet is instrumental in many visual recognition tasks, as it addresses the vanishing gradient problem,
    allowing very deep networks to be trained effectively. 
    In this paper, we used the pre-trained weights for ResNet from TorchVision. We list their performance, model card name, and parameters below.
    
    \begin{itemize}
    \item For ResNet-101, we used the v2 weight from the new training recipe of torchvision. The Top1 accuracy is 81.886\% and Top5 accuracy is 95.78\% on ImageNet-1K, with 44.5M parameters. Its representation dimensionality is equal to 2048.
    \item For ResNet-50, we used the v2 weight from the new training recipe of torchvision. The Top1 accuracy is 80.858\% and Top5 accuracy is 95.434\% on ImageNet-1K, with 25.6M parameters. Its representation dimensionality is equal to 2048.
    \item For ResNet-18, we used the default weight (or v1 weight from torchvision). The Top1 accuracy is 69.758\% and Top5 accuracy is 89.078\% on ImageNet-1K, with 11.7M parameters. Its representation dimensionality is equal to 512.
    \end{itemize}

\end{itemize}
\begin{itemize}

    \item \textbf{Improved CNNs.} Many research works propose variants of convolutional architectures to improve ResNets. Among these efforts, DenseNet \cite{huang2017densely} and EfficientNet \cite{tan2019efficientnet} are two representative architectures that are more parameter-efficient.
    They design different strategies to connect layers differently to improve the parameter-efficiency as well as information flow inside the networks.
    DenseNet improves the flow of information and gradients throughout the network by densely connecting all layers directly with each other. EfficientNet designs a compound coefficient that uniformly scales the depth, width, and resolution of CNNs and thus improves performance with fewer parameters.
    \begin{itemize}
    \item For DenseNet, we used the weights from timm (model card \texttt{densenet121.ra\_in1k}). The Top1 accuracy is 75.57\% and Top5 accuracy is 92.61\%, with 8.0M parameters. Its representation dimensionality is equal to 1024.
    \item For EfficientNet-B3, we used the weights from timm (model card \texttt{efficientnet\_b3.ra2\_in1k}). The Top1 accuracy is 77.60\%  and Top5 accuracy is 93.59\%,  with 12.2M parameters. 
     Its representation dimensionality is equal to 1536.
    \item For EfficientNet-B1, we used the weights from timm (model card \texttt{efficientnet\_b1.ft\_in1k}), The Top1 accuracy is 78.54\% and  Top5 accuracy is 94.38\%,  with 7.8M parameters. Its representation dimensionality is equal to 1280.
    \end{itemize}
    
\end{itemize}
\begin{itemize}

    \item \textbf{Vision Transformers.} Recently, the transformer architecture 
    % that uses self-attention to study correlation 
    has achieved strong performance even in the vision domain \cite{dosovitskiy2020image}. The vision transformer splits an image into patches and uses self-attention to study the interaction of patches. By doing so, it aggregates global information at an early stage, creating significantly different representations than CNNs \cite{raghu2021vision}.

    \begin{itemize}
    \item For VIT-B, we used the weights from timm (\texttt{vit\_base\_patch16\_224.augreg2\_in21k\_ft\_in1k}), The Top1 accuracy is 81.10\% and Top5 accuracy is 95.72\%, with 86.6M parameters. Its representation dimensionality is equal to 768. We note that the model is pre-trained on ImageNet21k and later fine-tuned on ImageNet1k.
    \item For VIT-S, we used the weights from timm (\texttt{vit\_small\_patch16\_224.augreg\_in21k\_ft\_in1k}). The Top1 accuracy is 74.63\% and Top5 accuracy is 92.67\%, with 22.1M parameters. Its representation dimensionality is 
    equal to 384. We note that the model is pre-trained on ImageNet21k and later fine-tuned on ImageNet1k.
    \end{itemize}

\end{itemize}
\begin{itemize}

    \item \textbf{Transformers without Self-Attention.} Recent studies show that self-attention is not required for obtaining good representations in vision. We study in particular MLP-Mixer \cite{tolstikhin2021mlp} and PoolFormer \cite{yu2022metaformer}, which uses multi-layer perceptrons and 
    % to mix information across patches of images, and , which uses 
    average pooling to mix information across patches, respectively. %are two interesting transformer architecture that gets rid of self-attention.

    \begin{itemize}
    \item For MLP-Mixer, we used weights from timm (model card \texttt{mixer\_b16\_224.goog\_in21k\_ft\_in1k}). The Top1 accuracy is 72.57\% and Top5 accuracy is 90.06\%, with 59.9M parameters. Its representation dimensionality is equal to 768. We note that it is pre-trained on ImageNet21k and later fine-tuned on ImageNet1k.
    \item For PoolFormer-S36, we used the weights from timm (model card \texttt{poolformerv2\_s36.sail\_in1k}). The Top1 accuracy is 81.54\% and Top5 accuracy is 95.69\%, with 30.8M parameters. Its representation dimensionality is equal to
    512.
    \item For PoolFormer-S24, we used the weights from timm (model card \texttt{poolformerv2\_s24.sail\_in1k}). The Top1 accuracy is 80.76\% and Top5 accuracy is 95.39\%, with 21.3M parameters. Its representation dimensionality is equal to 512.
    \end{itemize}
\end{itemize}

% % \subsection{Computing the `entropy' of spectral properties}

% % Additionally, we attempted to define a spectral entropy score similar to the Spectral Quantile Score (SQS) in Section~\ref{sec:si-encoder}:
% \begin{equation}
%  \operatorname{Entropy}(\mathcal{S}) = -\sum_{s_j \in \mathcal{S}} p(s_j) \log p(s_j)
% \end{equation}
% where $\{s_j\}$ are normalized to have $\sum s_j = 1$. While the entropy score does not have an explicit meaning due to the fact that we are considering the probability of a class appears, we found a strong positive relationship between the entropy score and the class bias as shown in Figure~\ref{fig:appendix_entropy}.
% We hypothesize that higher entropy intuitively can be understood as higher disorder in our case, but we note that further study is required to fully understand its meaning.

% \begin{figure*}[!h]
%     \centering
%     \includegraphics[width=0.4\textwidth]{}
%     \vspace{-3mm}
%     \caption{{\em An Entropy Score of different encoders.}}
%     \label{fig:appendix_entropy}
% \end{figure*}

\subsection{Details of ImageNet-C results}
\label{app:imagenetc_details}

We created the ``augmented" ImageNet-C datasets based on \cite{hendrycks2019benchmarking}. The preprocessing plans (corruptions) we selected are:
\begin{itemize}
    \item Gaussian Noise, 
Shot Noise,
Impulse Noise,
Defocus Blur,
Motion Blur,
Zoom Blur,
Snow,
Fog,
Brightness,
Contrast,
Elastic,
Pixelate,
JPEG,
Speckle Noise,
Spatter, and Saturate.
\end{itemize}
Note that these corruptions correspond to the $16$ augmentations discussed in Section~\ref{exp:data_augmentation}.
While~\cite{hendrycks2019benchmarking} only augment the test dataset, we perform the image altering operations on the training set as well, as we wish to estimate the eigenvalues using the training data. %Thus, we produce the eigenspectrum with the same data distribution as the testing set.

\begin{figure*}[!ht]
    \centering
    \includegraphics[width=0.6\textwidth]{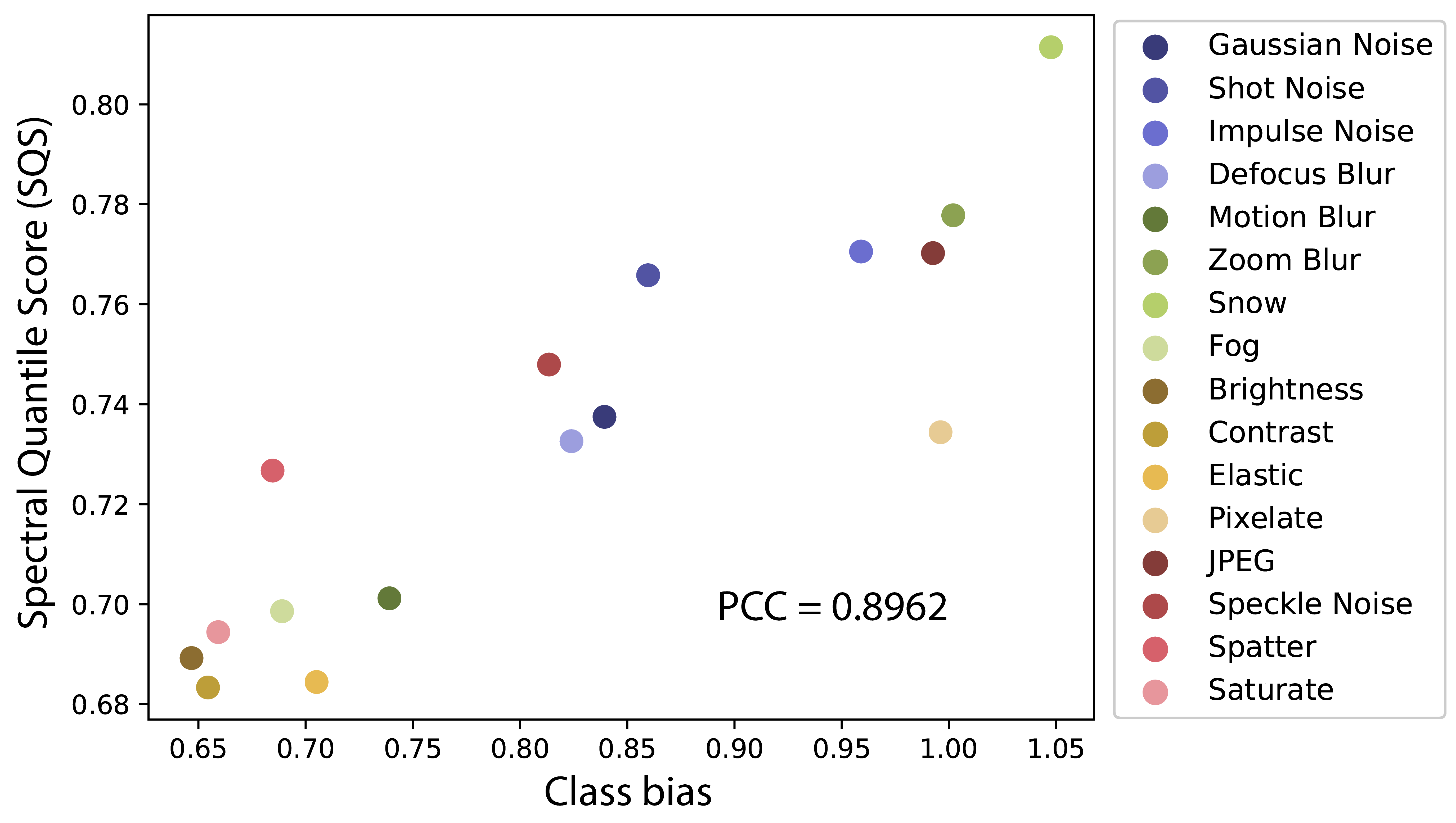}
    \vspace{-3mm}
    \caption{{\em Spectral Quantile Score (SQS) of different data manipulation plans.}}
    \label{fig:appendix_svs}
\end{figure*}

In Figure~\ref{fig:appendix_svs}, we show the Spectral Quantile Score (SQS) of applying the same encoder on different data with different data manipulation plans (as discussed in Section~\ref{exp:data_augmentation}). The PCC is 0.90, showing strong correlation (again) between class bias and the SQS.

For the statistical significance results for each class, we first compute the Pearson correlation coefficient $r$ for each class, and calculate the $p$-value as below:
\begin{equation}
    p = 2P(T_{n-2}>t), \text{ where } t=\frac{r \sqrt{n-2}}{\sqrt{1-r^2}}
\end{equation}
where $n$ is the number of samples, and $T_{n-2}$ is a Student $t$'s distribution with $n-2$ degrees of freedom.

%| Improvement | Gaussian Noise | Shot Noise | Impulse Noise | Defocus Blur | Motion Blur | Zoom Blur | Snow | Fog | Brightness | Contrast | Elastic | Pixelate | JPEG | Speckle Noise | Spatter | Saturate |
%| --- | --- | --- | --- | --- | --- | --- | --- | --- | --- | --- | --- | --- | --- | --- | --- | --- |
%| $\Delta$(AVG, Ours) | 0.019 | 0.022 | 0.031 | 0.019 | 0.017 | 0.039 | 0.040 | 0.024 | 0.034 | 0.030 | 0.024 | 0.033 | 0.010 | 0.020 | 0.024 | 0.030 |
%| $\Delta$(MIN, Ours) | 0.044 | 0.048 | 0.068 | 0.045 | 0.044 | 0.080 | 0.081 | 0.056 | 0.074 | 0.066 | 0.054 | 0.075 | 0.075 | 0.045 | 0.057 | 0.064 |
% 0.02605741667	0.05875033333

\begin{table}[h]
\caption{Full improvement results on ImageNet-C.}
\label{table:imagenetc_rebuttal}
\vskip 0.15in
\begin{center}
\begin{small}
\begin{sc}

\begin{tabular}{c|ccccccccc}
\toprule
\multirow{2}{*}{Improvement} 
& Gaussian & Shot & Impulse & Defocus & Motion & Zoom & & & \\
& Noise & Noise & Noise & Blur & Blur & Blur & Snow & Fog & Brightness \\
\midrule
$\Delta$(AVG, Ours) & 0.019 & 0.022 & 0.031 & 0.019 & 0.017 & 0.039 & 0.040 & 0.024 & 0.034 \\
$\Delta$(MIN, Ours) & 0.044 & 0.048 & 0.068 & 0.045 & 0.044 & 0.080 & 0.081 & 0.056 & 0.074 \\
\bottomrule
\end{tabular}
\vspace{2mm}
\begin{tabular}{c|cccccccc}
\toprule
\multirow{2}{*}{Improvement} & & & & & Speckle & & & \multirow{2}{*}{\textbf{Imp. Avg.}} \\
& Contrast & Elastic & Pixelate & JPEG & Noise & Spatter & Saturate & \\
\midrule
$\Delta$(AVG, Ours) & 0.030 & 0.024 & 0.033 & 0.010 & 0.020 & 0.024 & 0.030 & 0.026 \\
$\Delta$(MIN, Ours) & 0.066 & 0.054 & 0.075 & 0.075 & 0.045 & 0.057 & 0.064 & 0.059 \\
\bottomrule
\end{tabular}
\end{sc}
\end{small}
\end{center}
\vskip -0.1in
\end{table}

\paragraph{Complete results} 
We repeat the experiments to systematically demonstrate that the proposed ensemble method works across many different combinations and types of augmentations. Here, for each version of ImageNet-C (out of 16 total versions), we apply the ensemble method based on the other 15 types of corruption, creating 16x15 = 240 runs. For each column, we report the average and best improvement in overall accuracy obtained by ensembling. As shown in Table~\ref{table:imagenetc_rebuttal}, across 240 runs, our proposed ensemble method can improve the average augmentation and worse augmentation by 2.6\% and 5.9\%, respectively.

\subsection{Comparing spectral imbalance and semantic scale}
\label{app:compare_volume}
The authors of \cite{ma2022delving} study the overall volume of each class (related to the product of eigenvalues of the class-covariances) and its correlation with class disparities. In contrast, our work reveals dependence on the eigenvalue distributions rather than the overall volume, which we believe provides a more fine-grained characterization of feature imbalances. 

To demonstrate the difference between the two approaches, we compare our results to those reported in \cite{ma2022delving} for the case of sample-balanced data. Interestingly, we obtain similar overall levels of predictive accuracy (as measured by the Pearson correlation coefficient between the spectra and the per-class performance) to their proposed inter-cluster corrected measure (S), which takes into account inter-cluster distances in addition to the eigenvalue volume (S’). In contrast, the spectral volume (S’) itself does not have as strong predictive performance. Thus, this shows that a more fine-grained analysis of the eigenvalues can have a big benefit when describing class gaps in the sample-balanced setting.

\begin{table}[h!]
\caption{{Direct comparison with the semantic scale method on CIFAR-10.}
}
\label{table:rebuttal_scale}
\begin{center}
\begin{small}
\begin{sc}
\begin{tabular}{ccc}
\toprule
Model & Method & Absolute correlation \\
\midrule
ResNet-18 & \cite{ma2022delving} (S’) & 0.5433 \\
ResNet-18 & \cite{ma2022delving} (Best S) & 0.7850 \\
ResNet-18 & Ours & \textbf{0.8311} \\
ResNet-18 & $\Delta$ & 0.0461 \\
\midrule
ResNet-34 & \cite{ma2022delving} (S’) & 0.5750 \\
ResNet-34 & \cite{ma2022delving} (Best S) & 0.8056 \\
ResNet-34 & Ours & \textbf{0.8334} \\
ResNet-34 & $\Delta$ & 0.0278 \\
\bottomrule
\end{tabular}
\end{sc}
\end{small}
\end{center}
\vskip -0.1in
\end{table}

%%%%%%%%%%%%%%%%%%%%%%%%%%%%%%%%%%%%%%%%%%%%%%%%%%%%%%%%%%%%%%%%%%%%%%%%%%%%%%%
%%%%%%%%%%%%%%%%%%%%%%%%%%%%%%%%%%%%%%%%%%%%%%%%%%%%%%%%%%%%%%%%%%%%%%%%%%%%%%%

\end{document}